\documentclass[times,twocolumn,final]{elsarticle}

\usepackage{framed,multirow}

\usepackage{amssymb}
\usepackage{comment}
\usepackage{amsmath}

\usepackage{caption}
\usepackage{mathtools}
\usepackage{url}
\usepackage{xcolor}
\usepackage{soul} 
\definecolor{newcolor}{rgb}{.8,.349,.1}

\usepackage{rotating} 

\usepackage[switch,pagewise]{lineno} 

\usepackage[english]{babel}
\usepackage{amsthm}
\usepackage{hyperref}

\theoremstyle{definition}

\newtheorem{theorem}{Theorem}[section]

\newtheorem{lemma}[theorem]{Lemma}

\begin{document}


\begin{frontmatter}

\title{Learning Significant Persistent Homology Features for 3D Shape Understanding}%

 \author{Prachi Kudeshia}
\emailauthor{prachi.kudeshia@smu.ca}{P. Kudeshia}
\emailauthor{jiju.poovvancheri@smu.ca}{J. Poovvancheri}

\author{Jiju Poovvancheri}

\address{Saint Mary's University, Halifax, Canada}


\begin{abstract}
Geometry and topology constitute complementary descriptors of three-dimensional shape, yet existing benchmark datasets primarily capture geometric information while neglecting topological structure. This work addresses this limitation by introducing topologically-enriched versions of ModelNet40 and ShapeNet, where each point cloud is augmented with its corresponding persistent homology features. These benchmarks with the topological signatures establish a foundation for unified geometry-topology learning and enable systematic evaluation of topology-aware deep learning architectures for 3D shape analysis. Building on this foundation, we propose a deep learning-based significant persistent point selection method, \textit{TopoGAT}, that learns to identify the most informative topological features directly from input data and the corresponding topological signatures, circumventing the limitations of hand-crafted statistical selection criteria. A comparative study verifies the superiority of the proposed method over traditional statistical approaches in terms of stability and discriminative power. Integrating the selected significant persistent points into standard point cloud classification and part-segmentation pipelines yields improvements in both classification accuracy and segmentation metrics. The presented topologically-enriched datasets, coupled with our learnable significant feature selection approach, enable the broader integration of persistent homology into the practical deep learning workflows for 3D point cloud analysis.

\end{abstract}

\begin{keyword}
Persistent Homology \sep Topology \sep Point Cloud \sep Significant Topological Features
\end{keyword}

\end{frontmatter}


\section{Introduction}

Shape understanding is central to many computer graphics and computer vision applications, such as robotic perception, Augmented and Virtual Reality, architecture, and entertainment. 
Point clouds have emerged as a fundamental representation for capturing the 3D structure of real-world environments that offer rich geometric and spatial information; however, their irregular and unordered nature poses unique challenges for machine learning models. Pioneering works like PointNet \cite{qi2017pointnet} and subsequent advancements address these inherent complexities and enable efficient learning of point clouds, which is critical for understanding object shape in 3D space. These methods encode rich geometric information from the spatial distribution of points that allows models to learn features such as local curvature, surface normals, and global shape descriptors. While deep learning methods have made remarkable progress by leveraging these geometric properties, they often struggle to capture higher-level structural relationships and global shape invariants, especially in sparse or noisy data. This has led to growing interest in topological approaches, which offer complementary insights by analyzing the intrinsic shape and connectivity of data beyond its geometric embedding. 

While geometric features enrich point-cloud understanding by extending beyond raw spatial coordinates, topological features provide a complementary and often more robust perspective by focusing on the inherent connectivity and global structure of the point cloud, irrespective of its specific geometric embedding. While geometric features might be sensitive to noise, sampling density variations, and local deformations, topological features such as the number of connected components, loops, and voids, along with their persistence across different scales, offer a more abstract and resilient representation of the underlying shape. Thus, topological features capture the fundamental shape in terms of its global arrangement and relationships between parts, providing insights into properties that remain invariant under continuous deformations, offering a powerful tool for distinguishing objects and understanding scenes beyond purely geometric considerations. Therefore, in this work, we use the topology of a point cloud to provide complementary insights on the intrinsic structure and connectivity of the data. Combined with geometric features, these topological descriptors yield more robust and informative representations for downstream 3D tasks.

Unlike geometric features, which can be directly derived from point clouds, topological features require specialized algorithms such as persistent homology to extract multi-scale connectivity patterns and higher-order relationships within the data. In persistent homology, the topological features such as connected components, loops, and voids of a dataset are studied across multiple spatial resolutions. This computationally intensive process starts by building a filtration, in which a sequence of nested simplicial complexes, such as Vietoris-Rips or Alpha complexes \cite{dey2022computational}, are constructed from a point cloud. As the scale parameter (distance threshold) increases, more simplices (edges, triangles, tetrahedra, etc.) are introduced. The filtration step dominates the computational cost, as it requires generating and updating simplices across multiple scales. Consequently, filtration demands substantial time and memory resources \cite{malott2022survey, peek2023synthetic, coskunuzer2024topological}. As a result, topological feature generation during the model training is usually not feasible, especially for large point cloud datasets. To address this limitation, we provide a topological dataset composed of persistence diagrams (PDs) computed in homology dimensions 1 and 2 (i.e., $H_{1}$ and $H_{2}$) of benchmark point cloud datasets, ModelNet40 \cite{wu20153d} and Shapenet \cite{shapenet2015}, which can be directly used during training to extract and aggregate topological features for classification and part-segmentation.

The complex filtration process across different scales results in different sizes of the persistence diagram for each point cloud. In addition, noisy topological features in persistence diagrams result in the large overall size of the persistence diagrams. Therefore, the direct utilization of variable-sized as well as large persistence diagrams of a dataset in a neural network architecture is challenging \cite{coskunuzer2024topological}. A viable solution to this problem is the filtering of noisy features from the persistence diagram of each homology dimension, such that persistence diagrams have a uniform fixed size across samples. However, recognition of such topological noise is not straightforward. In general, topological features with short persistence are considered insignificant \cite{coskunuzer2024topological}.
 \begin{figure}[h]
\centering
\hspace*{-0.5cm} 
\includegraphics[width=9.0 cm]{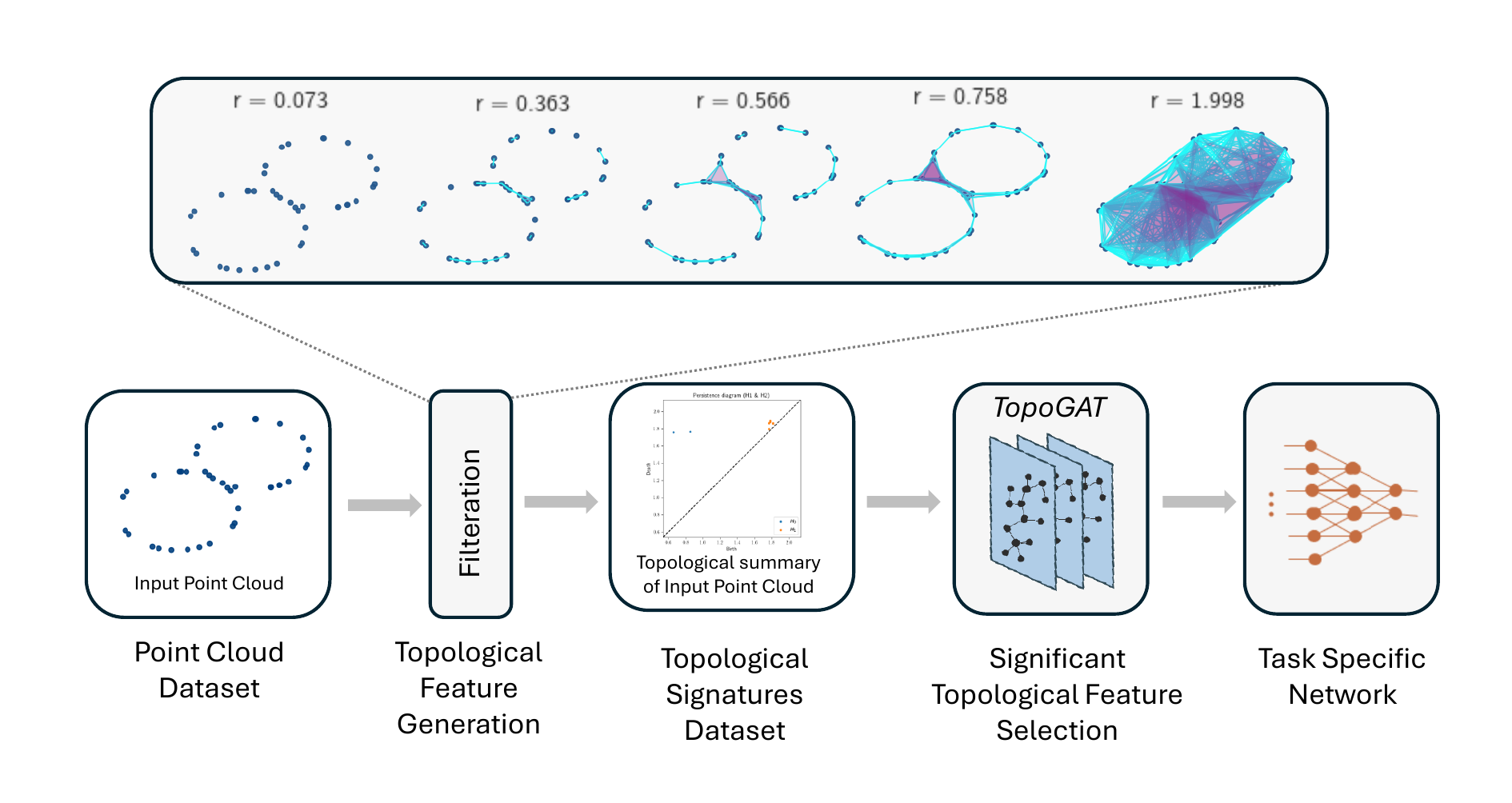}
\caption{General framework of Persistent Homology-based point cloud learning based on the proposed work. Main contribution of this work lies in the curation of the Topological-Signatures dataset and Significant Topological Feature selection Network \textit{TopoGAT}.}
\label{intro}
\end{figure}
 In existing persistent homology-based point cloud learning works \cite{hu2024topology}, a fixed number of features is selected from the pool of features generated by persistent homology. However, this approach fails to adapt to the varying complexity and scale of different datasets. A fixed feature selection strategy may overlook important topological information or fail to represent the full range of persistent features, particularly in datasets with complex structures and diverse geometric variations. To address this limitation, selecting features based on data-driven criteria rather than relying on a predetermined number ensures the retention of the most informative topological attributes for downstream tasks like classification or segmentation. In this work, we introduce a neural network architecture designed for significant topological feature selection, enabling a more adaptive and expressive representation of the underlying topological structures that enhances both model performance and generalization. Figure \ref{intro} illustrates the general framework of this work, showing how our topological-signatures dataset and the proposed significant topological feature selection network, \textit{TopoGAT}, integrate into the broader persistent-homology based point cloud learning pipeline. 

 We present a comparative study on different types of Graph Neural Networks (GNNs) for point cloud learning that leverages geometric as well as topological features for classification and part segmentation. The key contribution of this work is as follows. 
\begin{itemize}
\item \textbf{Topological Signatures Dataset:} We construct a topological signatures dataset that contains topological descriptors of ModelNet40 \cite{wu20153d} and ShapeNet~\cite{shapenet2015} that can be directly utilized to study the topology of 3D shapes or can be used to generate vector representations for further Topological Data Analysis (TDA) and 3D shape understanding applications.

\item \textbf{\textit{TopoGAT}:} We introduce a neural network that learns the topological descriptors and removes insignificant topological features by incorporating a loss function consisting of classification loss along with a novel topological loss term.

\item \textbf{Comparative study on GNNs with topological features:} We conduct a focused comparative study of various graph neural network architectures for point cloud analysis that learns geometrical, along with the topological features of the point cloud. 
\end{itemize}

The rest of the paper is organized as follows. Section 2 discusses existing related work on persistent homology-based point cloud learning and topological feature selection. Section 3 discusses the preliminaries on persistent homology. The significant topological feature selection network, along with the topological signatures dataset, is described in Section 4. Experimental setup and results, followed by an ablation study, are presented in Section 5. Finally, a conclusion and future work are discussed in Section 6.

\section{Related Work}
\label{sec2}
In this section, we will present an overview of Persistent Homology-based Machine Learning (PHML) methods for point cloud classification and segmentation, along with topological feature filtering techniques explored in prior studies.

\paragraph{PHML-based point cloud learning}
Persistent homology provides a unique way to quantify topological features of a 3D shape. As described in Section \ref{PH}, persistent homology serves as a tool to observe the change in topological features across varying filtration values. In addition, persistence of these features, measured by persistent homology, is differentiable concerning the original point cloud \cite{edelsbrunner2002topological, clough2020topological}. These properties of persistent homology make it a perfect candidate for incorporating topological insights directly into neural network training.  Many recent works of understanding point clouds \cite{peek2023synthetic}, \cite{zhou2022learning}, \cite{liu2022toposeg}, and images \cite{montufar2020can}, 3D data generation \cite{hu2024topology}, and surface reconstruction \cite{jignasu2024stitch} have explored persistent homology in integration with deep learning techniques. 

\paragraph{Topological Feature Filtering}
The variable and huge size of the persistence diagram is a major bottleneck in its application in deep learning based downstream tasks. However, a few works have discussed solutions to mitigate this problem. One popular choice is the selection of a fixed number of high-persistence topological features in persistence diagrams. Authors in \cite{hu2024topology} and \cite{liu2022toposeg} have used 16 and 300 top $k$ persistence points in the original persistence diagrams, respectively. However, a major drawback of such a filtering scheme is that it imposes an additional parameter in the deep learning setup, which becomes very crucial for the persistence diagrams with most of the topological features close to each other. Moreover, the top $k$ persistence point-based selection does not account for the topological noise for individual persistence diagrams. For example, as shown in Figure \ref{topoDATASET}, 15 topological noise points will be selected for object \textit{tv\_stand} while 10 topological noise points will be selected for object \textit{Bookshelf} in the ModelNet40 dataset for $k = 16$. Thus selection of a single optimal $k$ value for the whole dataset is often difficult and is not suitable for datasets with a variety of objects.

In a TDA paper \cite{fasy2014confidence}, several statistical methods are derived to generate confidence sets to separate topological noise from significant topological features in a persistence diagram. Authors use simple and synthetic 2D examples, such as circle and eyeglass shapes, to present proof of concept. These methods present a strong foundation on the significance of filtering of topological noise; however, these methods pose computational challenges for large datasets. In this work, we will discuss these methods in a comparative study to present the pros and cons of statistical methods along with our proposed method. 

\begin{figure}[t]
\centering
\includegraphics[width=8.0 cm]{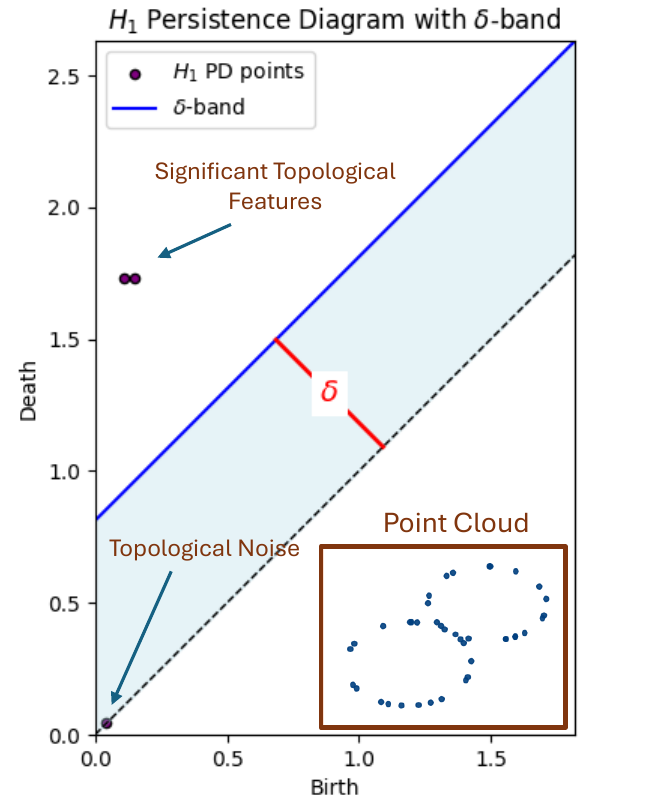}
\caption{The persistence diagram records the birth and death of topological features (clusters, loops, cavities) as points. A $\delta$-band is drawn around the diagonal: features with short lifetimes, represented by points inside this band (highlighted), are classified as topological noise. Features whose points lie outside the band are highlighted as significant persistent homology features, capturing robust and meaningful structure in the underlying shape (lower-right corner).}
\label{sph}
\end{figure} 

\section{Preliminaries}
\label{prelim}

In this section, we define persistent homology and other necessary fundamentals to provide insights into TDA-based point cloud learning. We refer the reader to \cite{dey2022computational} for more advanced details.

\subsection{Persistent Homology}\label{PH}
Persistent homology is a powerful tool to quantify the topology of a 3D shape. Persistent homology identifies and tracks topological features such as connected components, loops, and voids across multiple spatial scales. Therefore, it captures invariant structural properties of a 3D shape that remain stable under continuous deformations.

Persistent homology has been used to extract global and multi-scale descriptors by constructing filtrations over point clouds that result in topological summaries such as persistence diagrams or persistence barcodes. These topological summaries complement geometric features in order to improve generalization and robustness in downstream tasks such as object recognition or similarity comparison. The key idea of persistent homology is to follow the appearance and disappearance of topological features over the filtration. 

Let X be a finite metric space in $X \in \mathbb{R}^{d}$ (such as a 3D point cloud). A filtration $\{K_\epsilon\}_{\epsilon\geq0}$ is a nested sequence of simplicial complexes $K_{\epsilon}$ that includes simplices formed by connecting points that lie within a distance $\epsilon$. In each homological dimension $k$, the collection of birth-death pairs $(b, d)$ of $k$-dimensional homology classes forms a persistence diagram $Dgm_{k}(X)$. Thus, a persistence diagram is a multiset of points $(b, d) \in \mathbb{R}^{2} \cup \{ \infty \}$ with $b \leq d$ that corresponds to topological features that are born at filtration value $b$ and die at $d$. The difference $d - b$ is known as the persistence of the topological feature and represents its lifetime across the filtration. 

The persistence $d - b$ of the topological feature measures how long a topological feature remains present as the filtration evolves. Features with large persistence represent stable and meaningful structures in the data. On the other hand, features with small persistence are often considered as topological noise. Persistent homology builds on this idea by organizing these birth-death pairs across all scales and producing a complete multi-scale summary of the topological structure of the point cloud. In this way, the collection of persistence values encodes how the shape of the data changes with the filtration parameter, making persistent homology a powerful framework for distinguishing significant geometric features from noise.
\begin{figure*}[t]
\centering
\includegraphics[width=16 cm]{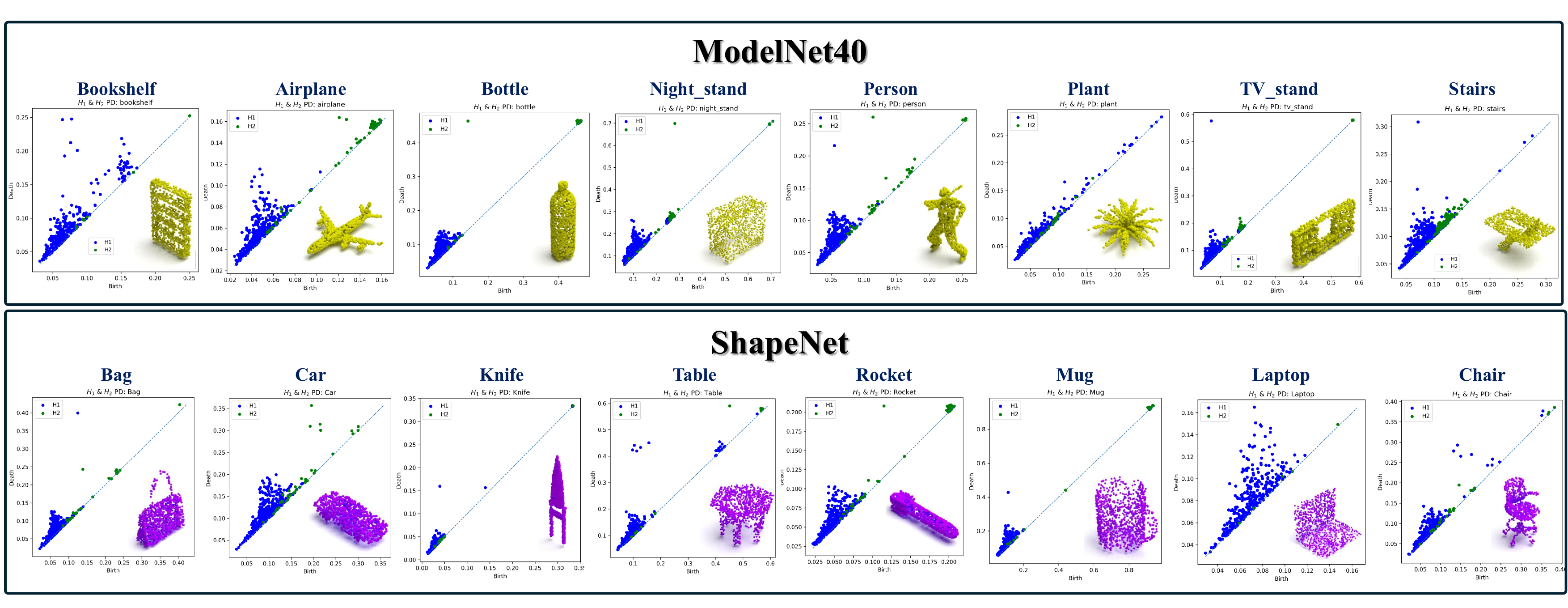}
\caption{\textbf{Topological Signatures Dataset:} Example $H_{1}$ and $H_{2}$ persistence diagrams of eight objects each from different classes of ModelNet40 \cite{wu20153d} (in green) and ShapeNet \cite{shapenet2015} (in purple) respectively. Generating these PDs took over 860 hours on an Intel Silver 4216 Cascade Lake CPU (187 GB RAM) with NVIDIA V100 Volta GPU (32G HBM2 memory) across ModelNet40 and ShapeNet, reflecting the high computational cost of persistent homology at scale, and underscoring the utility of our precomputed topological signatures. In each diagram, blue points denote $H_{1}$ PD points and green points denote $H_{2}$ PD points. } 
\label{topoDATASET}
\end{figure*}
\subsection{Significant Persistent Homology Features}
Persistent homology provides rich topological summaries of 3D shapes, but not every feature in the persistence diagram carries geometric meaning for the underlying shape. A crucial question is how to distinguish genuine shape characteristics from topological noise. In point cloud data, insignificant features with low persistence commonly arise due to noise, sparse sampling, or minor surface perturbations, manifesting as points close to the diagonal in the persistence diagram \cite{fasy2014confidence}.

While statistical approaches (e.g., \cite{fasy2014confidence}) offer data-driven criteria for distinguishing significant topological features in persistence diagrams, in this work we focus on a geometric notion of significance that is particularly well suited to 3D point cloud analysis. The central idea is that a feature in the persistence diagram is geometrically significant if it persists over a sufficiently large range of filtration scales, reflecting stability and prominence in the underlying shape.

Formally, let $X \subset \mathbb{R}^3$ be a finite point cloud, and let $\{K_\epsilon\}_{\epsilon\geq0}$ be a filtration built from $X$ (e.g., via the Vietoris–Rips or Čech complex). Each $k$-dimensional feature is associated with a birth-death pair $(b, d)$ in the persistence diagram $\mathrm{Dgm}_k(X)$, where $b$ and $d$ are the parameter values at which the feature appears and disappears.

We define a geometric significance threshold $\delta > 0$, and classify a feature as \textit{significant} if
\[
d - b > \delta
\]

Here, $\delta$ represents the minimal lifetime (across scale) that a feature must exhibit to be considered robust; features with $d-b \leq \delta$ are treated as geometric noise, typically arising from sampling artifacts or small fluctuations.

This geometric definition is closely tied to the stability of persistent homology~\cite{cohen2007stability}. 

\begin{theorem}\label{thm1}
   \textbf{Bottleneck Stability Theorem} (Cohen-Steiner et al., 2007~\cite{cohen2007stability}). 
\emph{Let $X, X' \subset \mathbb{R}^d$ be finite metric spaces. Then the bottleneck distance between their $k$-dimensional persistence diagrams is bounded by their Hausdorff distance:}
\[
W_\infty\bigl(\mathrm{Dgm}_k(X),\, \mathrm{Dgm}_k(X')\bigr) \leq H(X, X').
\] 
\end{theorem}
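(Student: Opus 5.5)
Since the filtration here is the Čech (or Vietoris--Rips) filtration of a finite point cloud, I would reduce Theorem~\ref{thm1} to stability of sublevel-set persistence for tame functions, and then obtain that via the interleaving argument. Put $f_X(y)=\min_{x\in X}\|y-x\|$ on $\mathbb{R}^d$. By the nerve lemma, the sublevel set $f_X^{-1}(-\infty,\epsilon]$, which is the union of closed $\epsilon$-balls around $X$, is homotopy equivalent to the Čech complex $C_\epsilon(X)$. These equivalences commute with the inclusions as $\epsilon$ grows. Consequently $\mathrm{Dgm}_k(X)$ agrees with the sublevel-set persistence diagram of $f_X$. Set $h=H(X,X')$. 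Every point of $X'$ lies within $h$ of $X$, so the triangle inequality yields $\|f_X-f_{X'}\|_\infty\le h$. The estimate therefore follows once I show $W_\infty(\mathrm{Dgm}_k(f),\mathrm{Dgm}_k(g))\le\|f-g\|_\infty$ for tame $f,g$; the distance functions qualify because they have finitely many critical values.

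\textbf{Interleaving.} Let $\eta=\|f-g\|_\infty$. Then $F_\epsilon\subseteq G_{\epsilon+\eta}\subseteq F_{\epsilon+2\eta}$, where $F_\epsilon,G_\epsilon$ denote sublevel sets. Applying $H_k$ produces persistence modules $U,V$ that are $\eta$-interleaved. The maps $U_\epsilon\to V_{\epsilon+\eta}\to U_{\epsilon+2\eta}$ compose to the internal structure map of $U$, and the same holds with the roles of $U$ and $V$ swapped. Pointwise finite dimensionality follows from tameness, so each module decomposes into interval summands, and the diagrams are well defined.

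\textbf{Algebraic stability.} This is the main obstacle: an $\eta$-interleaving must yield an $\eta$-matching of the diagrams. I would follow Cohen-Steiner, Edelsbrunner and Harer and first prove the \emph{box lemma}. For a box $R=[a,b]\times[c,d]$ with $b<c$, write $R^\eta$ for the box enlarged by $\eta$ on every side. The lemma says that $\#(\mathrm{Dgm}(U)\cap R)\le\#(\mathrm{Dgm}(V)\cap R^\eta)$. I would express the multiplicity in $R$ by inclusion--exclusion over ranks of the maps $U_b\to U_c$, $U_a\to U_c$, $U_b\to U_d$ and $U_a\to U_d$. The interleaving lets each of these maps factor through $V$ at the shifted indices, which gives the rank inequalities needed.

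\textbf{Building the matching.} Next I would interpolate: for $t\in[0,1]$ let $h_t=(1-t)f+tg$. Each consecutive pair $h_t,h_s$ satisfies $\|h_t-h_s\|_\infty=|t-s|\eta$. When $|t-s|$ is smaller than half the minimal separation among the finitely many off-diagonal points of $\mathrm{Dgm}(h_t)$, the box lemma applied to tiny boxes around each point gives a $|t-s|\eta$-matching. Points near the diagonal are matched to the diagonal. By compactness of $[0,1]$, finitely many such steps suffice. Composing these matchings and summing the step sizes gives an $\eta$-matching between $\mathrm{Dgm}_k(f)$ and $\mathrm{Dgm}_k(g)$. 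The careful part of this step is making sure the interpolants stay tame and the multiplicities stay finite along the way. Substituting $\eta\le h$ then gives the bound.

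If the filtration is Vietoris--Rips instead, the same interleaving comes straight from a correspondence: every point of $X$ is sent to a point of $X'$ at distance at most $h$. This induces simplicial maps $R_\epsilon(X)\to R_{\epsilon+2h}(X')$, and back in the other direction. The maps are contiguous with the inclusions, so the homology-level diagrams commute. Here the shift is measured in diameter; under the radius convention it becomes $h$.
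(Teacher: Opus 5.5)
The paper gives no proof of this statement. It quotes the result from Cohen-Steiner, Edelsbrunner and Harer, and your outline (distance functions, box lemma, interpolation through $h_t=(1-t)f+tg$) follows the architecture of that reference. Your closing remark on conventions is correct and relevant. The paper's diagrams come from Ripser's diameter-convention Vietoris--Rips filtration, where the sharp constant is $2$: pushing the vertices of a square of side $s$ radially outward by $h$ moves its $H_1$ point from $(s,s\sqrt{2})$ to $(s+\sqrt{2}h,\,s\sqrt{2}+2h)$. So the inequality as stated is the \v{C}ech (or radius-convention) version.

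The genuine gap is in the step you yourself call the main obstacle. Write $r_U(s,t)=\mathrm{rank}(U_s\to U_t)$. The multiplicity $r_U(b,c)-r_U(a,c)-r_U(b,d)+r_U(a,d)$ is an alternating sum. Factoring through $V$, however, only produces $r_U(s,t)\le r_V(s+\eta,t-\eta)$ and $r_V(s-\eta,t+\eta)\le r_U(s,t)$. To reach the box $R^\eta$ term by term you would need three further inequalities: $r_U(a,c)\ge r_V(a-\eta,c-\eta)$, $r_U(b,d)\ge r_V(b+\eta,d+\eta)$ and $r_U(a,d)\le r_V(a-\eta,d+\eta)$. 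Each fails already for single bars whose endpoints differ by at most $\eta$; the third fails even for $U=V$. So the rank inequalities you obtain do not combine into the box lemma.

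The missing idea is to merge the two modules into one linear sequence. For sublevel sets, $G_{a-\eta}\subseteq F_a\subseteq F_b\subseteq G_{b+\eta}\subseteq G_{c-\eta}\subseteq F_c\subseteq F_d\subseteq G_{d+\eta}$ whenever $b+\eta\le c-\eta$. For an abstract interleaving, the composite maps give a commuting chain whose restrictions to the two sets of slots are $U$ and $V$. Decompose the homology of this eight-term chain into intervals. The summands counted in $\#(\mathrm{Dgm}(U)\cap R)$ are exactly those starting at the $F_b$ slot and ending at the $F_c$ slot. Each of them, restricted to the $G$ slots, is counted in $\#(\mathrm{Dgm}(V)\cap R^\eta)$.

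Two further points need an argument rather than a mention. First, the interpolants $h_t$ are tame because each is a proper semialgebraic function, so Hardt triviality gives finitely many homological critical values and finitely generated homology of sublevel sets. Second, your Rips paragraph cannot reuse $h_t$, since $R_\epsilon(X)$ and $R_\epsilon(X')$ have different vertex sets. Instead, take the correspondence $C=\{(x,x')\in X\times X':\|x-x'\|\le h\}$ and the Rips filtrations of $C$ under the pulled-back pseudometrics $\|x-y\|$ and $\|x'-y'\|$. These are monotone functions on the full simplex on $C$ and differ by at most $2h$. By contiguity they have the same diagrams as $X$ and $X'$, so their convex combinations are again filtrations of a single finite complex.
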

where $H(X, X')$ denotes the Hausdorff distance between $X$ and $X'$. Here, $W_\infty$ is the bottleneck distance, which measures the greatest difference that must occur between matched pairs of features (including possibly matching features to the diagonal) in the two persistence diagrams, under the best possible matching. This fundamental result ensures that persistent homology features are robust under small perturbations of the underlying space. We formalize the robustness of \textit{significant} persistent homology features in Lemma \ref{lm1}.

\begin{lemma}\label{lm1}
\emph{Let $X \subset \mathbb{R}^3$ be a finite point cloud, and suppose $(b, d) \in \mathrm{Dgm}_k(X)$ has persistence $d-b > \delta$ for some $\delta > 0$. Then for any perturbation $X'$ with Hausdorff distance $H(X, X') < \delta/2$, there exists a corresponding feature in $\mathrm{Dgm}_k(X')$ with persistence at least $d-b-\delta$.} 
\end{lemma}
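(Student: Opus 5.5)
Apply Theorem \ref{thm1} directly. Set $\eta := H(X,X') < \delta/2$. By Theorem \ref{thm1},
\[
W_\infty\bigl(\mathrm{Dgm}_k(X),\mathrm{Dgm}_k(X')\bigr) \le \eta < \delta/2 .
\]
Since the diagrams are finite multisets, there is a partial matching $\gamma$ realizing a cost $\eta' < \delta/2$; if the infimum is not attained, pick one with cost strictly below $\delta/2$. Under $\gamma$, every off-diagonal point $(b,d)$ of $\mathrm{Dgm}_k(X)$ is either paired with a point $(b',d')\in\mathrm{Dgm}_k(X')$ or sent to the diagonal. The cost of each pair is measured in the $\ell_\infty$ norm.

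\emph{First, rule out the diagonal case.} The $\ell_\infty$ distance from $(b,d)$ to the diagonal is $(d-b)/2$. Since $d-b>\delta$, this distance exceeds $\delta/2$, which is larger than the matching cost. So $(b,d)$ cannot be matched to the diagonal and must be paired with a genuine point $(b',d')$ of $\mathrm{Dgm}_k(X')$.

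\emph{Second, bound the persistence of the partner.} From $\|(b,d)-(b',d')\|_\infty \le \eta' < \delta/2$ we get $|b-b'|<\delta/2$ and $|d-d'|<\delta/2$. Therefore
\[
d'-b' \ge (d-\eta')-(b+\eta') = d-b-2\eta' > d-b-\delta ,
\]
which gives the claimed bound, in fact strictly. This is also strictly positive, so the partner is a genuine off-diagonal feature.

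The main obstacle is not computational but the edge cases.

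\emph{Features with $d=\infty$.} For essential classes I would treat them separately. The bottleneck matching must pair infinite points with infinite points, since any finite cost forces this. The partner then has $d'=\infty$ and infinite persistence, so the claim holds trivially.

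\emph{Hypotheses of Theorem \ref{thm1}.} As stated, the theorem is taken for the Čech or Vietoris–Rips filtration under the paper's parametrization. For Rips there can be a factor of $2$ depending on convention. I would simply invoke Theorem \ref{thm1} as stated. I would note that under a constant $c$ in the stability bound, the same argument works with the hypothesis $H(X,X')<\delta/(2c)$.
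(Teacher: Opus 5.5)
Your proof is correct and follows the same route as the paper: invoke Theorem~\ref{thm1} to obtain a matched point $(b',d')$ with $|b-b'|,|d-d'|\le H(X,X')$, then conclude $d'-b'\ge d-b-2H(X,X')>d-b-\delta>0$. Your explicit exclusion of a diagonal match (the $\ell_\infty$ distance $(d-b)/2>\delta/2$ exceeds the matching cost), your handling of essential classes, and your remark on the Rips stability constant spell out details that the paper's proof leaves implicit.
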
 

\begin{proof}

Let $(b, d) \in \mathrm{Dgm}_k(X)$ be a feature with $d-b > \delta$. By stability theorem (Theorem \ref{thm1}), for each point $(b, d)$, there exists a point $(b', d')$ in $\mathrm{Dgm}_k(X')$ such that
\begin{equation}\label{eq1}
|b - b'| \leq H(X, X').
\end{equation}

\begin{equation}\label{eq2}
|d - d'| \leq H(X, X').
\end{equation}

From Equations \ref{eq1}-\ref{eq2}, it follows that the persistence in $X'$ satisfies
\[
d' - b' \geq d - b - 2 H(X, X').
\]
If $H(X, X') < \delta/2$, then
\[
d' - b' > d - b - \delta >0.
\]
Therefore, the feature survives as long as the perturbation is less than half the persistence. 
Any feature with persistence more than $\delta$ in the original data is guaranteed to have a corresponding feature in any sufficiently close point set $(H(X, X') < \delta/2)$, with persistence above $0$.
\end{proof}
Lemma \ref{lm1} ensures that \textit{significant} features reflect genuine, large-scale geometry of the underlying shape, rather than ephemeral noise. In the context of 3D point cloud shape analysis, the geometric persistence threshold $\delta$ provides a principled means to automatically identify robust shape properties, such as prominent tunnels or cavities, while systematically filtering out spurious topological artifacts caused by noise or sampling irregularities. Practically, this approach is visualized by overlaying a band of width $\delta$ along the diagonal of the persistence diagram; only features whose persistence exceeds this band are deemed geometrically significant and interpreted as reliable topological signatures, as illustrated in Fig.~\ref{sph}.

While this framework provides a robust theoretical foundation for geometric significance in persistent homology, it is important to acknowledge that practical machine learning models such as deep neural networks may not exactly recover all theoretically significant features. Instead, such models strive to capture representations that approximate the set of robust, geometrically meaningful features. This interplay highlights both the utility and limitations of persistent homology in learning; although the geometric definitions and associated stability theorems establish what is theoretically significant, deep neural networks leverage persistent homology to encode informative and generalizable shape signatures for downstream tasks, even if the correspondence with the ideally significant features is only approximate. 
\begin{figure*}[t]
\centering
\includegraphics[width=16 cm]{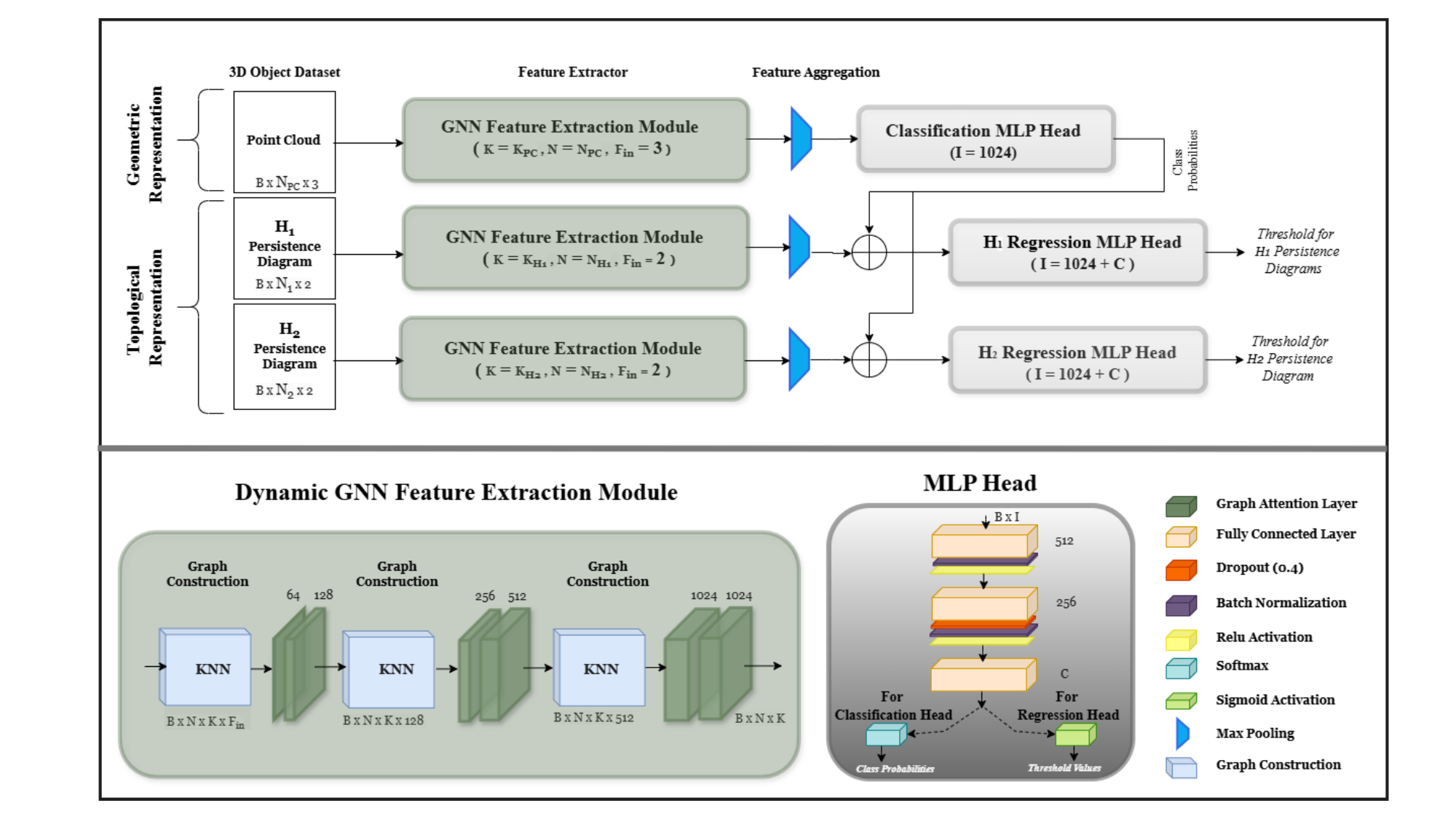}
\caption{\textbf{\textit{TopoGAT} Architecture:} Three identical GNN feature extraction modules individually construct dynamic KNN graphs of a point cloud along with its topological representations while utilizing GAT layers to extract geometric as well as topological features. Classification MLP head predicts logits for each point cloud, which is further utilized to predict a threshold value for each point cloud using Regression MLP heads for each homological dimension. Classification and regression MLP heads differ in the input size and output layer.}
\label{architecture}
\end{figure*}
\section{Network Architecture}\label{NWarch}
 In this section, we will discuss the details of our topological signatures dataset and proposed significant topological feature selection network \textit{TopoGAT}, along with the persistence diagram filtering mechanism and the proposed topological loss function that guides the \textit{TopoGAT} for efficient significant feature selection.

\subsection{Topological Signatures Datasets}

 We introduce the topological signatures datasets (\textit{ModelNet40-PD} and \textit{ShapeNet-PD}), topological counterparts of the ModelNet40~\cite{wu20153d} and ShapeNet~\cite{shapenet2015} datasets. These datasets contain persistence diagrams of each point cloud in ModelNet40 \cite{wu20153d} and ShapeNet~\cite{shapenet2015} that can be further used for topological analysis of 3D shapes. These topological datasets contain homology-1 $H_{1}$ and homology-2 $H_{2}$ persistence diagrams of each point cloud. We use Vietoris–Rips simplicial complex to generate persistence diagrams from point cloud data to further calculate persistent homology by identifying the topological features that persist across different scales in a dataset. Initially, we sampled each point cloud to 1024 points using random sampling. Next, the filtration of each point cloud is performed to identify topological features such as connected components, loops, and voids. In this dataset, we provide $H_{1}$ and $H_{2}$ PDs and do not include $H_{0}$ PDs due to their less significance in the 3D shape learning application. 
 
 To ensure that all persistence diagrams are of equal size to be compatible with a deep learning setup, we padded PDs based on the following criteria. We first determine the size of the largest $H_{1}$ PD across the whole $H_{1}$ topological dataset and then expand each PD to match this size. For this, we pad it with the copies of the smallest persistence point of each $H_{1}$ PD, repeated as many times as needed to compensate for the size deficiency. Similarly, $H_{2}$ PDs are padded with their smallest persistence points. Replicating the least persistent points in each PD minimizes the distortion introduced by padding due to its proximity to the diagonal. This ensures minimal impact on topological distances such as Wasserstein distance or Bottleneck distances, which play a major role in topological learning of these datasets.

Figure \ref{topoDATASET} shows a set of $H_{1}$ and $H_{2}$ persistence diagrams of eight objects, each selected from different classes of ModelNet40 \cite{wu20153d} (in green) and ShapeNet \cite{shapenet2015} (in purple), respectively. In this figure, we selected point clouds from different classes to show the diversity in the number of high persistence (significant) $H_{1}$ and $H_{2}$ points in our dataset. For example, our dataset consists of single high-persistence $H_{2}$ point objects such as \textit{bottle}, \textit{night\_stand}, \textit{person} from ModelNet40, and \textit{table} and \textit{rocket} from ShapeNet. Similarly, \textit{person}, \textit{tv\_stand}, and \textit{stairs} from ModelNet40 and \textit{bag} from ShapeNet have one high persistence $H_{1}$ point, which can be visually verified by observing corresponding point clouds. Additionally, our dataset has objects that have multiple high persistence $H_{1}$ points, such as \textit{bookshelf} in ModelNet40 and \textit{table} in ShapeNet with 5 points. Our dataset also consists of complex objects like \textit{plant} in ModelNet40 and \textit{laptop} in ShapeNet, which are difficult to interpret because their persistence diagrams contain many low-persistence topological features, making it challenging to clearly identify and separate the significant points.

The combined time duration for generating these PDs across the ModelNet40 (approximately 360 hours) and ShapeNet (approximately 507 hours) datasets is over 860 hours, which demonstrates the extensive computational time involved in applying persistent homology to large-scale 3D deep learning tasks. Therefore, these pre-computed topological datasets provide the research community with immediate access to advanced topological features for rapid exploration and development of topology-aware models for point cloud processing.

\subsection{Topological Feature Selection Network: \textit{TopoGAT}}
To select significant features from the original $H_{1}$ PD and $H_{2}$ PD, we present a GNN-based neural network called \textit{TopoGAT}. As shown in Figure \ref{architecture}, \textit{TopoGAT} is a three-branch network to learn geometric features from the point cloud along with topological features from $H_{1}$-PDs and $H_{2}$-PDs. The features extracted directly from point clouds are used in a classification MLP head to train model parameters for given classification labels. The topological features extracted from PD, along with the class probabilities derived from the classification head, are concatenated to serve as input to the regression MLP heads that predict threshold values for each PD.

Feature extraction from each representation is done individually using three similar architecture modules named \textit{GNN feature extraction module}, which consists of three stacked dual GAT layer blocks, with KNN dynamically recomputed after each block. Dynamic graph learning adaptively updates neighborhood connections during training, enabling the model to capture evolving geometric and topological relationships. On the other hand, graph attention allows the model to focus on the most relevant geometric or topological relationships. 

Three similar structure MLPs are used for the classification head and both the regression heads, except for the output layer, which is softmax in the classification head and sigmoid activation in the regression heads. As a common practice adopted in most of the point cloud classification networks, softmax converts logits into a probability distribution over mutually exclusive classes by increasing the probability of the object class while decreasing the probability values for the other categories. In contrast, sigmoid activation in the regression head ensures independent continuous values bounded in $[0,1]$. Moreover, in persistence diagrams, the topological features may differ across classes due to variation in the topology of inter-class objects. Therefore, the input to each regression head is the concatenation of the corresponding topological feature map and the class probabilities predicted by the classifier. Thus, regression is conditioned on the predicted class, resulting in class-specific threshold values. This conditioning allows the regression head to personalize thresholds for each topology pattern instead of learning a single average threshold; thus, it improves the relevance and precision of the regression head. Figure \ref{TopoGAT_filter} shows a few examples of original PDs and their \textit{TopoGAT}-selected topological features from different classes in the ModelNet40 and ShapeNet datasets. 
\begin{figure*}[h]
\centering
\includegraphics[width=16 cm]{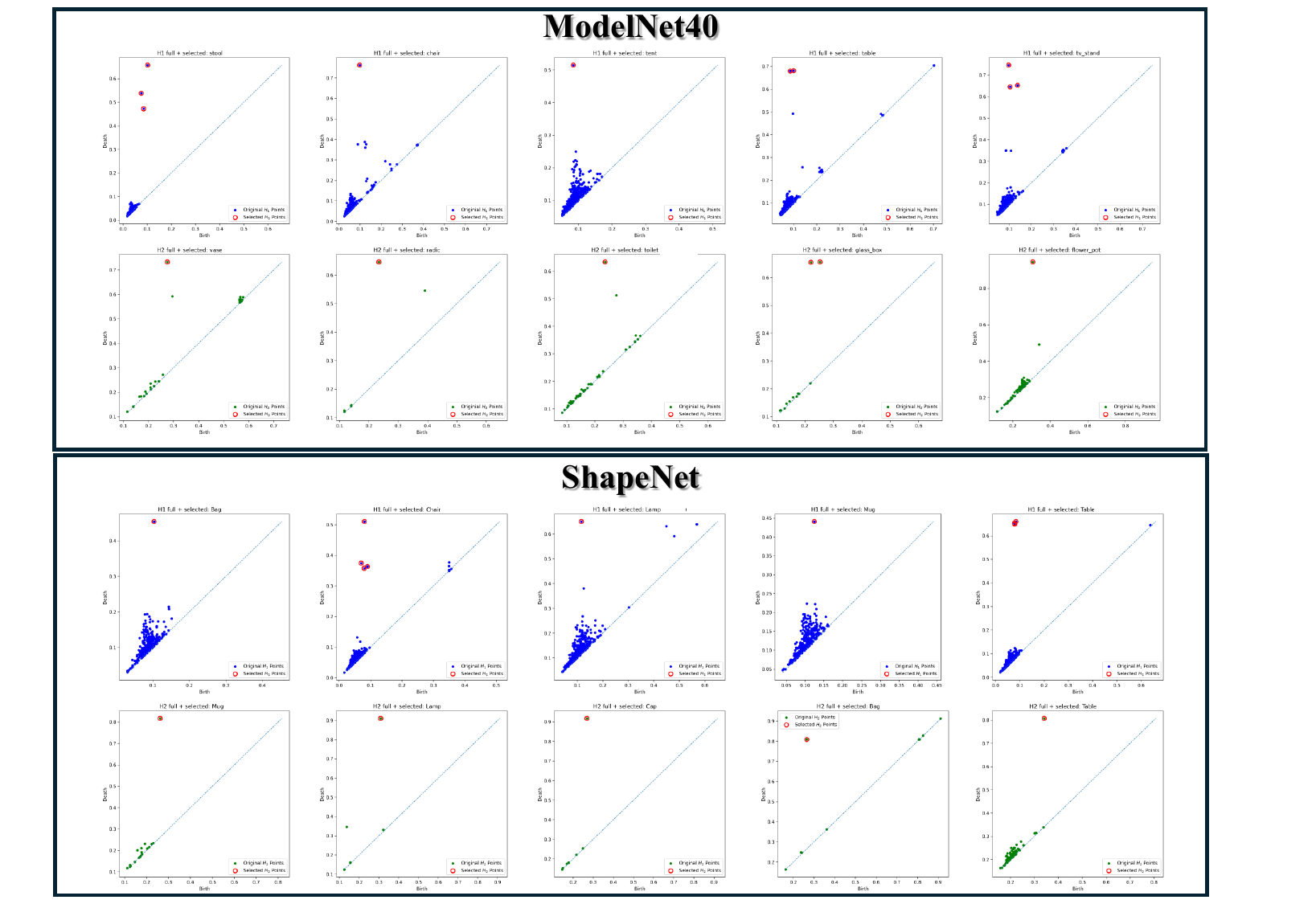}
\caption{Example of original PD (blue indicate $H_{1}$ features and green indicate $H_{2}$ features) with significant features selected by \textit{TopoGAT} (red circle).}
\label{TopoGAT_filter}
\end{figure*}
\subsection{Persistence Diagram Filtering}
The threshold values obtained from the \textit{TopoGAT} regression head are further used to filter out the insignificant points in the persistence diagrams. These filtered persistence diagrams are utilized in the topological loss function mentioned in Section \ref{TopolossDef}. A simpler way to filter a persistence diagram is by eliminating the topological features with persistence higher than the threshold value $\lambda$. However, such hard selection of top-$k$ persistent points is non-differentiable, resulting in no gradient flow from the loss back to the threshold prediction head. Therefore, to ensure end-to-end differentiability of this filtering process, we compute a sigmoid-based soft mask to filter out low-persistence points in the persistence diagram, as shown in Equation \ref{softFilteringMask}. 
    \begin{equation}
    \text{mask}_{i} = \sigma(\eta * (\text{p}_{i} - \lambda)) 
    \label{softFilteringMask}
    \end{equation}
This filtering technique allows the network to flow a gradient through the sigmoid $\sigma(.)$ and hence learn better thresholds. Additionally, sigmoid helps in avoiding abrupt drops in gradient flow. The hyperparameter $\eta$ controls the softness of the mask, and a large $\eta$ may cause vanishing gradients for threshold values, while a smaller $\eta$ will result in softer transitions. Hence, it is crucial to select a proper value of $\eta$. Therefore, we use a learnable $\eta$ that will facilitate the network to adaptively decide how strict the filtering should be, depending on the PD, class, as well as the task.

The mask corresponding to each PD is element-wise multiplied with its original PD $\text{PD}_{\text{org}}$ to yield the reduced PD $\text{PD}_{\text{s}}$, defined as follows.

    \begin{equation}
    \text{PD}_{\text{s}} = \text{mask} * \text{PD}_{\text{org}} 
    \label{softFiltering}
    \end{equation}
\subsection{Topological Loss} \label{TopolossDef}
One of the methods to leverage topology in deep learning is by incorporating a topological penalty in the overall loss function \cite{zia2024topological}. Such a strategy (loss-based integration) eliminates the need for additional topological layers, and hence it is straightforward considering the differentiability of the topological loss term. Therefore, to guide the training of \textit{TopoGAT} for significant topological point selection, we propose a composite loss function that consists of a geometric loss term $\mathcal{L}_{Geo}$ and a novel topological loss term $\mathcal{L}_{Topo}$ designed to capture persistent structural features of the input point cloud. 
    \begin{equation}
    \mathcal{L} = \mathcal{L}_{Geo} + \mathcal{L}_{Topo}
    \label{LossFunction}
    \end{equation}

We define a topological loss term $\mathcal{L}_{\text{Topo}}$ to learn topological features extracted from the persistence diagrams $H_{1}$ and $H_{2}$ of the point cloud. We use this topological loss term to select significant persistence features while penalizing the removal of topologically informative features that preserve the underlying shape of the data. Therefore, this topological loss term comprises three components: the Wasserstein distance between the original and reduced persistence diagrams, the persistent entropy of the reduced diagram, and the rate of reduction in the persistence diagram size. 

\paragraph{\textbf{Wasserstein Distance Loss (WL)}} The Wasserstein distance between two persistence diagrams measures their similarity by considering the cumulative cost of optimally transporting features from one persistence diagram to the other. It aggregates the overall differences across all matched features. Therefore, it allows small discrepancies spread across many points to be captured, rather than being dominated by a single worst-case mismatch. Within the loss function, this topological distance encourages the reduced persistence diagram to remain close to the original by minimizing the total transport cost of topological features. This ensures that both prominent and moderately significant features are preserved in a balanced way, discouraging excessive distortion or loss of structural information in the reduction process.

\paragraph{\textbf{Persistent Entropy Loss (EL)}} The persistent entropy is a measure of the complexity and information content of topological features in a persistence diagram. The persistent entropy of the reduced PD, as a loss term, discourages over-simplification of the PD, resulting in the selection of diverse and informative features, rather than oversimplifying the diagram by keeping only a few very dominant ones. By penalizing entropy loss, we avoid degenerate cases where only a minimal number of features are preserved, which could compromise downstream performance, especially in complex data like ModelNet40 point clouds. 

\paragraph{\textbf{Reduction Loss (RL)}} The third Component of our topological loss is the reduction loss, which is the fraction of points removed from the original persistence diagram. It is the ratio of the size of the reduced diagram to that of the original diagram. This term serves as a soft penalty for retaining too many topological features, thereby providing a balance between informativeness and compactness. 

By comprising these abovementioned loss components, we define our topological loss term as follows.

    \begin{equation}
        \begin{aligned}
            \mathcal{L}_{\text{Topo}} &= \alpha \cdot \mathcal{L}_{\text{WD}} + \beta \cdot \mathcal{L}_{\text{PE}} + \gamma \cdot \mathcal{L}_{\text{R}}, \\
            \text{where} \quad &\alpha + \beta + \gamma = 1, \quad \alpha, \beta, \gamma \in (0,1)
        \end{aligned}
    \label{eq:TopoLoss}
    \end{equation}
Instead of fixed weights for different components of $\mathcal{L}_{\text{Topo}}$, we use learnable $\alpha$, $\beta$, and $\gamma$ that allow the network to adaptively balance the scale and importance of each term throughout the training. To ensure stability and differentiability, a softmax operation on these learnable topological weights is applied.

In the geometric loss term, we use the Softmax Cross-Entropy Loss function to perform multi-class classification using geometric features extracted from the point cloud. The geometric loss term utilizes the cross-entropy between predicted class probabilities and ground truth labels to supervise the classification branch, which in turn guides the regression block responsible for predicting threshold values for identifying significant features in the persistence diagram.
\begin{table*}[t!]
\centering
\renewcommand{\arraystretch}{1.2}
\setlength{\tabcolsep}{4pt}

\captionsetup{singlelinecheck=false, justification=centering}
\caption{Comparison of Significant Feature Selection Methods with \textit{TopoGAT} for ModelNet40 \cite{wu20153d} dataset.}
\begin{tabular}{|p{2.0cm}|p{1.2cm}|p{1.2cm}|p{1.2cm}|p{1.2cm}|p{1.2cm}|p{1.2cm}|p{2.0cm}|}
\hline
\textbf{Feature Selection Method} &
\multicolumn{2}{c|}{\textbf{Mean WD$\downarrow$}} &
\multicolumn{2}{c|}{\textbf{Mean BD}$\downarrow$} &
\multicolumn{2}{c|}{\textbf{Mean PE Difference}$\downarrow$} &
\textbf{Processing Time (s)} \\
\cline{2-7}
& \textbf{$H_{1}$-PD} & \textbf{$H_{2}$-PD}
& \textbf{$H_{1}$-PD} & \textbf{$H_{2}$-PD}
& \textbf{$H_{1}$-PD} & \textbf{$H_{2}$-PD} & \\
\hline

Method 1 \cite{fasy2014confidence} & 0.056 & 0.069 & 0.239 & 0.233 & 52.7 & 3.43 & 1182720.0 \\
Method 2 \cite{fasy2014confidence} & 0.028 & 0.040  & 0.060 & 0.039 & 23.5 & 1.38 & \textbf{778.7} \\
Method 3 \cite{fasy2014confidence} & 0.037 & 0.054 & \textbf{0.056} & \textbf{0.032} & 23.5 & 1.40   & 162083.9 \\
\textit{TopoGAT} & \textbf{0.024} & \textbf{0.026} & 0.061 & 0.045 & \textbf{23.4} & \textbf{1.37} & 149099.2 \\
\hline
\end{tabular}\label{tab1}
\end{table*}

\begin{table*}[t!]
\centering
\renewcommand{\arraystretch}{1.2}
\setlength{\tabcolsep}{4pt}

\captionsetup{singlelinecheck=false, justification=centering}
\caption{Comparison of Significant Feature Selection Methods with \textit{TopoGAT} for ShapeNet \cite{shapenet2015} dataset.}
\begin{tabular}{|p{2.0cm}|p{1.2cm}|p{1.2cm}|p{1.2cm}|p{1.2cm}|p{1.2cm}|p{1.2cm}|p{2.0cm}|}
\hline
\textbf{Feature Selection Method} &
\multicolumn{2}{c|}{\textbf{Mean WD$\downarrow$}} &
\multicolumn{2}{c|}{\textbf{Mean BD$\downarrow$}} &
\multicolumn{2}{c|}{\textbf{Mean PE Difference$\downarrow$}} &
\textbf{Processing Time (s)} \\
\cline{2-7}
& \textbf{$H_{1}$-PD} & \textbf{$H_{2}$-PD}
& \textbf{$H_{1}$-PD} & \textbf{$H_{2}$-PD}
& \textbf{$H_{1}$-PD} & \textbf{$H_{2}$-PD} & \\
\hline

Method 1 \cite{fasy2014confidence} & 0.049 & 0.029 & 0.268 & 0.106 & 38.52 & 2.57 & 1439262.2  \\
Method 2 \cite{fasy2014confidence} & 0.033 & 0.005 & 0.059 & 0.021 & 15.49 & 1.09 & \textbf{1117.2} \\
Method 3 \cite{fasy2014confidence} & 0.048 & 0.007 & \textbf{0.053} & \textbf{0.019} & 15.48 & 1.09  & 210129.8 \\
\textit{TopoGAT} & \textbf{0.029} & \textbf{0.004} & 0.061 & 0.021 & \textbf{15.46} & \textbf{1.08} & 183953.5 \\
\hline
\end{tabular}\label{tab2}
\end{table*}
  
\section{Evaluation}

In this section, we discuss the comparative study of existing statistical topological feature selection techniques and proposed \textit{TopoGAT}, along with the performance of \textit{TopoGAT}-based feature selection in downstream tasks classification and part-segmentation. For a comparative study of existing statistical topological feature selection techniques, we use three feature selection methods from \cite{fasy2014confidence}: Method 1, Method 2, and Method 3, which follow statistical approaches to estimate confidence bands for significant topological features.

\subsection{Computational Environment}
The proposed architecture is developed in Python 3.11 and utilizes PyTorch 2.0 for artificial neural network layers. We use ModelNet40 \cite{wu20153d} and Shapenet \cite{shapenet2015} datasets to generate their corresponding topological equivalent datasets. To generate homology-1 $H_{1}$ and homology-2 $H_{2}$ persistence diagrams of each point cloud in both datasets, we use the high-performance persistent homology package Ripser \cite{ctralie2018ripser}. Ripser computes Vietoris–Rips persistence diagrams from point cloud data to generate topological features that persist across different scales in a dataset. Please note that due to excessive recursion in the algorithm, we reduced the size of one point cloud in ModelNet40 and 10 point clouds in ShapeNet from 1024 to 920 to deal with the memory management issue. Mathematical computations and geometry processing are done using Sci-kit Learn and NumPy libraries. For significant topological feature selection and training and testing for classification experiments, we use an Intel Silver 4216 Cascade Lake CPU with NVIDIA V100 Volta GPU. For part-segmentation, we use an AMD EPYC 9454 (Zen 4) CPU with an NVIDIA H100 SXM5 GPU. 
 
\subsection{Performance Metrics}
 For the quantitative analysis of comparisons among different significant PD point selection methods, we use the mean- and classwise Wasserstein distance, the mean- and classwise bottleneck distance, and the mean- and classwise persistent entropy difference between the original and reduced persistence diagrams, along with runtime. 

The bottleneck distance (BD) \cite{fugacci2016persistent, agami2023comparison} between two persistence diagrams measures the similarity between the persistence diagrams and quantifies the closeness of reduced PD to its original PD. This ensures the minimal cost of matching features between them while allowing for some points of bigger PD, i.e., original PD, to be matched to the diagonal. Thus, the bottleneck distance quantifies the maximum deviation between matched topological features of the original and reduced diagrams. Therefore, a low bottleneck distance between the original and reduced diagrams ensures that the reduced persistence diagram preserves the most prominent topological features of the original by minimizing the worst-case loss of information. 

In contrast to the bottleneck distance, which focuses on the single largest deviation between matched features, the Wasserstein distance (WD) \cite{berwald2018computing, agami2023comparison} accounts for all differences between corresponding topological features in the original PD and its reduced PD. WD is sensitive to significant as well as short-lived features. Therefore, a low WD value indicates high similarity in the overall distribution of topological features in the original PD and reduced PD. 

As mentioned in section \ref{TopolossDef}, persistent entropy \cite{merelli2015topological} strikes a balance between simplification and topological representativity. Thus, the persistence entropy difference between two persistence diagrams measures the absolute difference between their entropy values by transforming each diagram into a probability distribution based on the normalized lifetimes of its topological features. Therefore, a low difference in persistence entropy of two PDs shows the high similarity in their distributions of feature lifetimes and hence a comparable topological complexity.

For the downstream task performance of the proposed deep learning based significant PD-point selection network, we use overall classification accuracy (OA) and mean accuracy (mAcc). For part segmentation experiments, we use Intersection-over-Union (IoU), point-wise accuracy (pAcc), and class-wise IoU.

\begin{figure*}[h]
\centering
\includegraphics[width=16 cm]{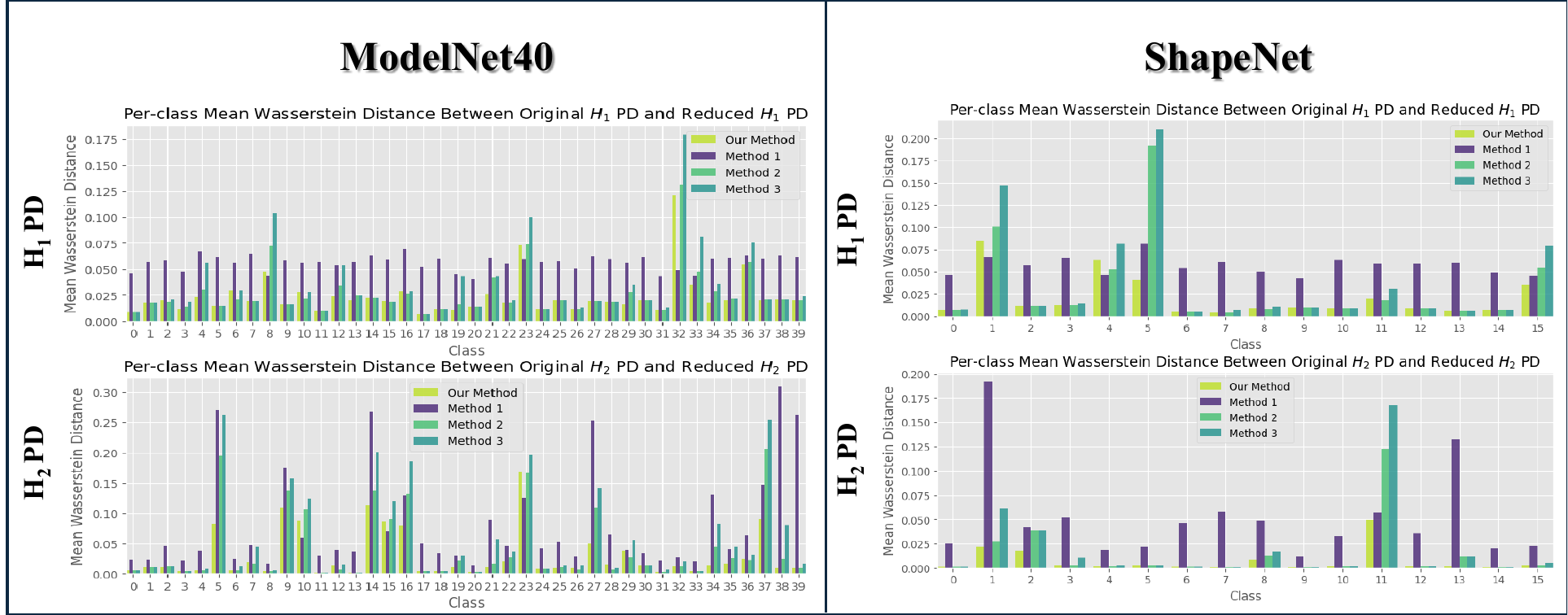}
\caption{Class-wise mean Wasserstein Distance between original PD and reduced PD for ModelNet40 (left) and ShapeNet (right).}
\label{WD}
\end{figure*}

\begin{figure*}[h]
\centering
\includegraphics[width=16 cm]{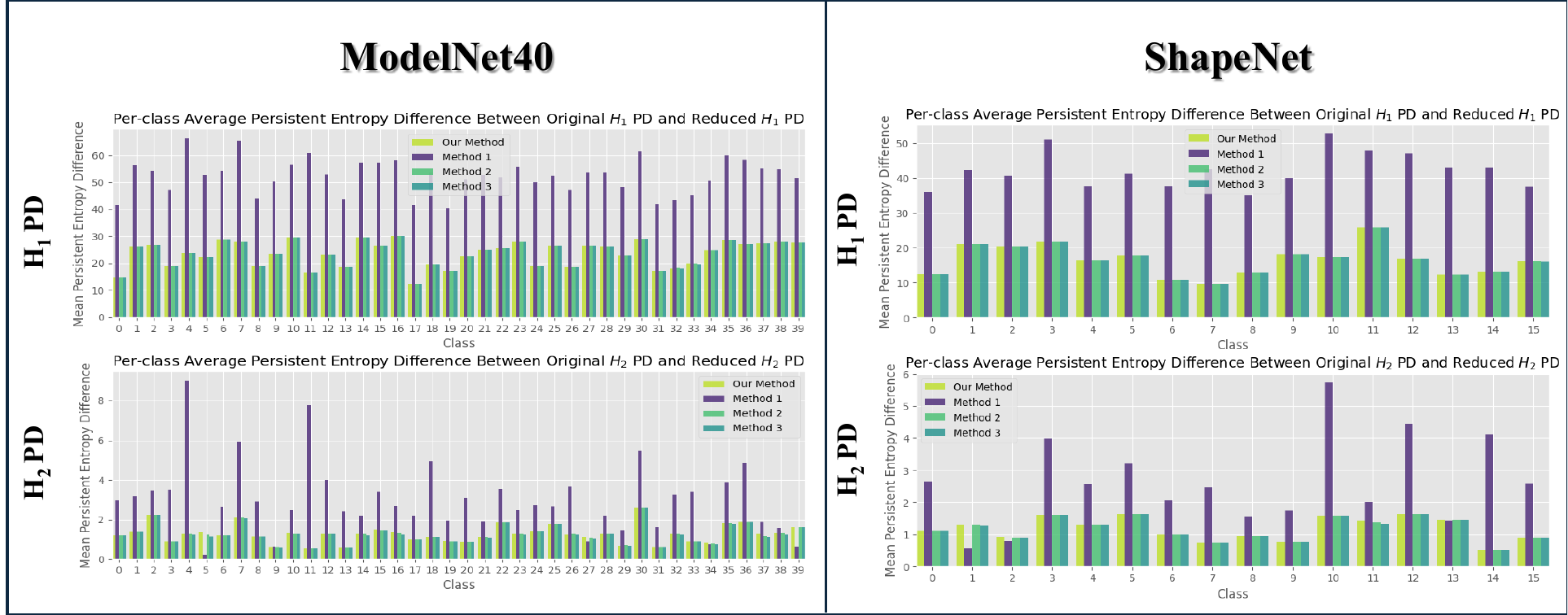}
\caption{Class-wise mean difference between Persistent Entropies of original PD and reduced PD for ModelNet40 (left) and ShapeNet (right).}
\label{PE}
\end{figure*}

\begin{figure*}[h!t]
\centering
\includegraphics[width=16 cm]{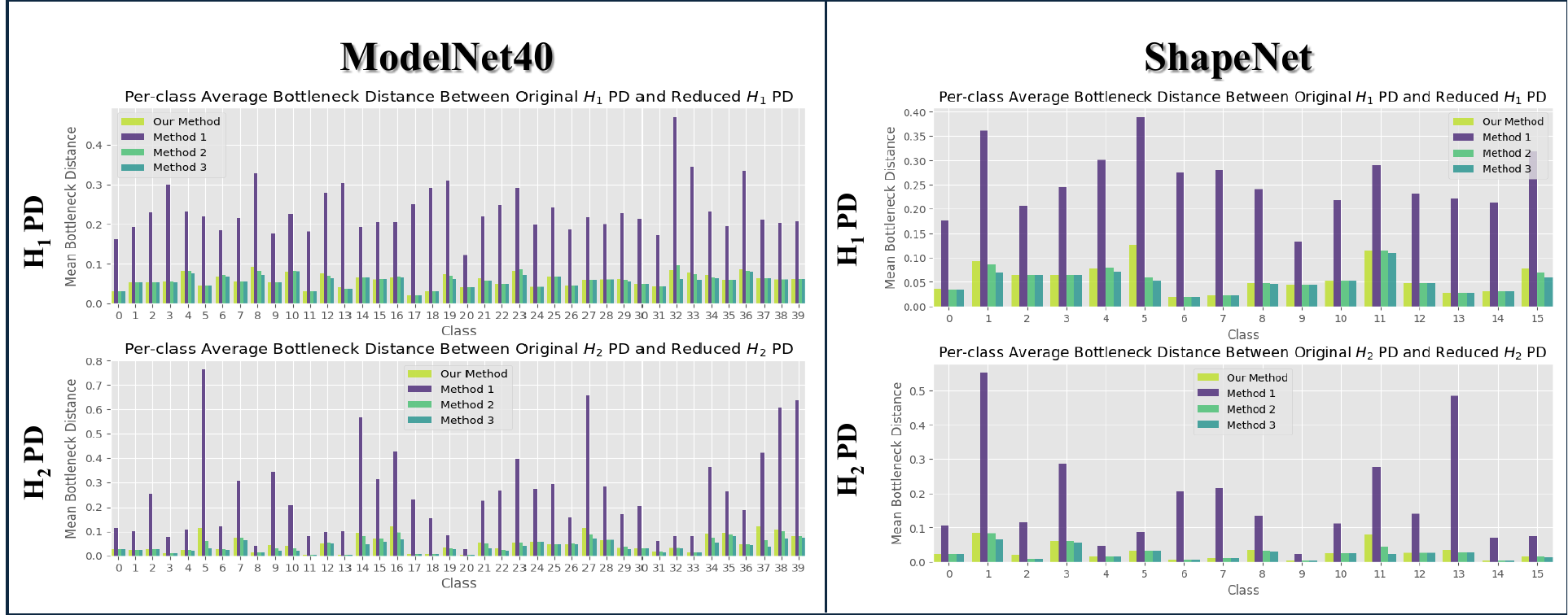}
\caption{Class-wise mean Bottleneck Distance between original PD and reduced PD for ModelNet40 (left) and ShapeNet (right).}
\label{BD}
\end{figure*}

\subsection{Significant PD Points Selection: Statistical Methods Vs. \textit{TopoGAT}}
 In this section, we compare the classical methods described in \cite{fasy2014confidence} with the proposed deep learning network for the selection of significant topological features. We implemented these methods based on the mathematical description provided in the paper.

 \textit{The concentration of measure} (Method 2) and \textit{the method of shells} (Method 3) from \cite{fasy2014confidence} present a threshold value for topological features in the persistence diagram, which is based on the point cloud. These threshold values are directly applied to our persistence diagram dataset to filter the significant topological features. Moreover, these methods derive one threshold value from each PC to filter insignificant topological features from both $H_{1}$ and $H_{2}$ persistence diagrams.
 
 The \textit{subsampling} (Method 1)~\cite{fasy2014confidence} repeatedly draws small subsets of the data and computes a statistic on each subset. Further, the empirical distribution of these statistics is used to define a conservative confidence bound for the parameter of interest. Unlike method 1, which uses Rips complex-based filtration, method 4 utilizes the cubical complex for filtration to generate persistence diagrams with significant topological features, and therefore, we do not include method 4 in this study due to the simplicial complex difference. 

 Tables \ref{tab1} and \ref{tab2} present the comparative study of these methods in terms of overall mean Wasserstein distance (mean WD), mean bottleneck distance (mean BD), and mean persistent entropy difference (mean PE diff) metrics for ModelNet40 and ShapeNet datasets, respectively. Similarly, Figures \ref{WD}, \ref{PE}, and \ref{BD} show the classwise values of mean WD, mean BD, and mean PE diff for both the datasets. The tables and figures show that Method 1 involves a longer selection process and yields higher overall as well as classwise average values across all three metrics. In contrast, \textit{TopoGAT} achieves the lowest mean WD and mean PE diff among all four methods, demonstrating that its reduced persistence diagrams are most similar to the original diagrams in terms of distribution. On the other hand, \textit{TopoGAT} attains the third-lowest mean BD, ranked after method 3 and method 2, with minimal differences in their scores. This ensures that, similar to methods 3 and 2, the proposed \textit{TopoGAT}-based feature selection method preserves the most prominent topological features of the original PD. Similarly, Figures \ref{WD}, \ref{PE}, and \ref{BD} show that \textit{TopoGAT} attains lower class-wise average values, comparable to methods 2 and 3 across most classes in both ModelNet40 and ShapeNet, with \textit{TopoGAT} outperforming them in a few classes. Moreover, \textit{TopoGAT} is faster than both method 1 and method 3.  

Figure \ref{pdcomp} presents a few examples of persistence diagrams of different objects from both datasets with original PD points (in blue for $H_{1}$ and in green for $H_{2}$), \textit{TopoGAT}-selected PD points (encircling the original PD points in red) along with $\delta$-bands for method 1, 2, and 3 (in pink, black, and magenta dotted lines). We observe that method 1 mostly has a very high $\delta$-band, such that it considers all the PD points as topological noise, except for a very few cases (as shown in the first column). Additionally, in most cases, \textit{TopoGAT} and method 3 perform similarly to each other, while method 2 generally has a $\delta$-band higher than that of method 3 and hence considers topological features as topological noise that are recognized as significant by method 3 as well as \textit{TopoGAT}. A few failed cases of method 3 are shown in rows 1, 2, and 3 in column 4, where method 3 could not recognize a significant feature.

A key limitation of methods 2 and 3 is that they derive a single threshold value from each point cloud to filter out insignificant topological features in both persistence diagrams $H_{1}$ and $H_{2}$. As a result, these methods do not account for the distinct distributions of individual homology classes. 
\begin{figure}[h]
\centering
\includegraphics[width=8.0 cm]{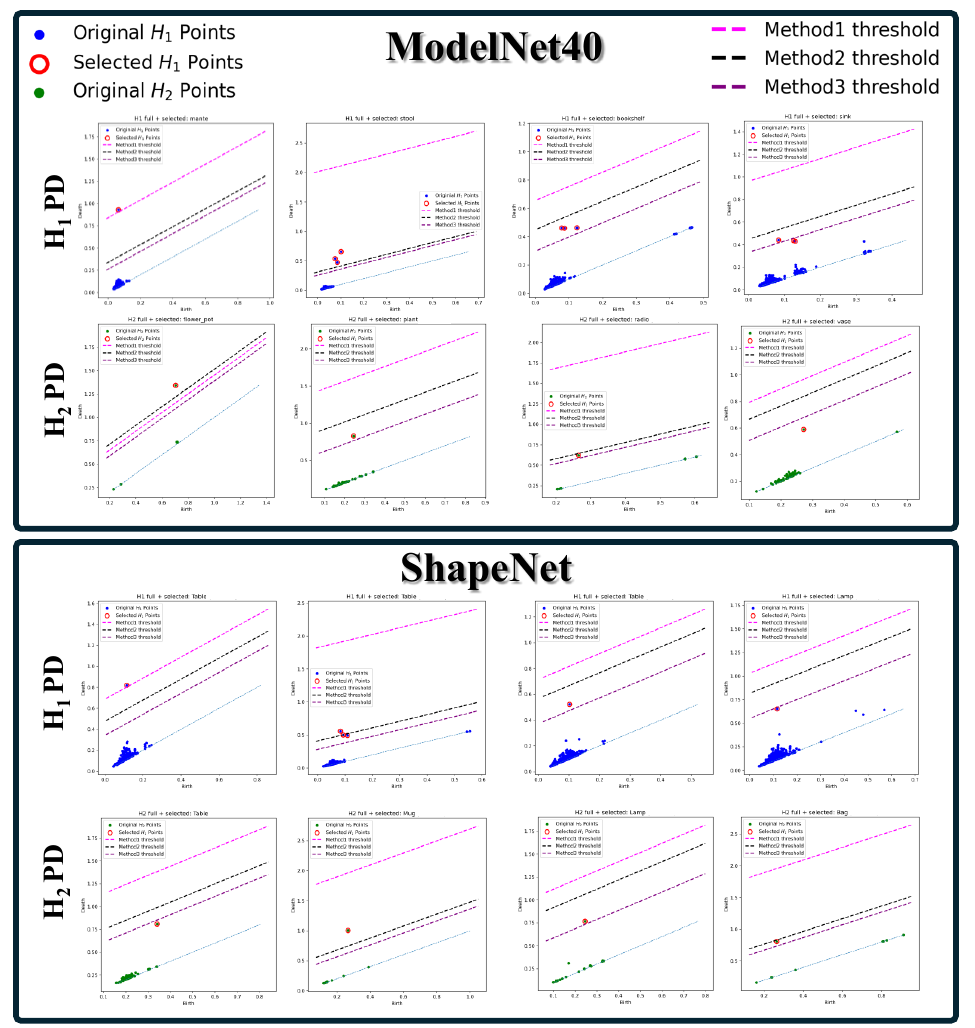}
\caption{Qualitative comparison of significant topological feature selection by various statistical methods and proposed \textit{TopoGAT}. Blue and green denote original $H_{1}$ and $H_{2}$ PD points. The red circles encircling original PD points denote the \textit{TopoGAT}-selected PD points (Selected $H_{1}$ or $H_{2}$ points). Pink, black, and magenta dotted lines denote $\delta$-bands for methods 1, 2, and 3 respectively.}
\label{pdcomp}
\end{figure}

\begin{table}[h!]
\centering
\captionsetup{singlelinecheck=false, justification=centering}
\caption{Classification results on ModelNet40 dataset for different GNN architectures with different input combinations, using PD based vectorization. OA and mAcc in \% . W/A and W/S expand to with all and with selected respectively. }
\small
\resizebox{\linewidth}{!}{%
\begin{tabular}{|l|cc|cc|cc|}
\hline
\textbf{Input(s)} &
\multicolumn{2}{|c|}{\textbf{GCN}} &
\multicolumn{2}{c|}{\textbf{GAT}} &
\multicolumn{2}{c|}{\textbf{GIN}} \\ \cline{2-7}
 & \textbf{OA} & \textbf{mAcc} 
 & \textbf{OA} & \textbf{mAcc}
 & \textbf{OA} & \textbf{mAcc} \\
\hline
Baseline  & 80.91 & 76.89 & 82.18 & 78.98 & 68.23 & 56.24 \\
W/A PD points & 82.01 & 79.66 & 83.77 & 80.76 & 69.31 & 58.46 \\
W/S PD points & 84.98 & 80.85 & \textbf{87.14} & 82.01 & 69.87 & 56.01 \\
\hline
\end{tabular}
}
\label{gcn_gat_gin_results}
\end{table}
\begin{table}[h!]
\centering
\captionsetup{singlelinecheck=false, justification=centering}
\caption{Classification results on ModelNet40 dataset for different vectorization with different input combinations. The DL architecture used is indicated in parentheses. OA and mAcc in \% . W/A and W/S expand to with all and with selected respectively.}
\small
\resizebox{\linewidth}{!}{%
\begin{tabular}{|l|cc|cc|cc|}
\hline
\textbf{Input(s)} &
\multicolumn{2}{c|}{\textbf{PL (GIN)}} &
\multicolumn{2}{c|}{\textbf{PI (ResNet\cite{torchvision2016})}} &
\multicolumn{2}{c|}{\textbf{PD (GAT)}} \\ \cline{2-7}
 & \textbf{OA} & \textbf{mAcc} 
 & \textbf{OA} & \textbf{mAcc}
 & \textbf{OA} & \textbf{mAcc} \\
\hline
Baseline  & 82.49 & 78.27 & 78.68 & 73.34 & 82.18 & 78.98 \\
W/A PD points & 82.78 & 78.65 & 80.25 & 72.81 & 83.77 & 80.76 \\
W/S PD points & 83.05 & 79.04 & 80.93 & 76.97 & \textbf{87.14} & 82.01 \\
\hline
\end{tabular}
}
\label{vectorization_results}
\end{table}
\begin{table*}[t]
\centering
\captionsetup{singlelinecheck=false, justification=centering}
\captionof{table}{Part segmentation Results on ShapeNet: mean IoU, point-wise accuracy, and per-category IoU (\%). Significant PD points selected by \textit{TopoGAT} are abbreviated as S:H1-PD and S:H2-PD. Second row corresponds to the number of \# of point clouds (PC) in each category. }
\footnotesize
\setlength{\tabcolsep}{1pt}
\resizebox{\linewidth}{!}{%
\begin{tabular}{|p{1.0cm}|c|c|c|cccccccccccccccc|}
\hline
\textbf{Archi.} & \textbf{Input(s)} & \textbf{mIoU} & \textbf{pAcc} &
\textbf{Airplane} & \textbf{Bag} & \textbf{Cap} & \textbf{Car} &
\textbf{Chair} & \textbf{Earph.} & \textbf{Guitar} & \textbf{Knife} &
\textbf{Lamp} & \textbf{Laptop} & \textbf{Motor} & \textbf{Mug} &
\textbf{Pistol} & \textbf{Rocket} & \textbf{Skate} & \textbf{Table} \\
\hline
\multicolumn{4}{|c|}{} &
341 & 14 & 12 & 158 & 704 & 14 & 159 & 83 & 288 & 83 & 51 & 38 & 44 & 12 & 31 & 848 \\
\hline

 & PC & 66.7 & 90.6 &
68.6 & 64.4 & 65.9 & 64.8 &
68.4 & 63.9 & 68.1 & 67.2 &
65.3 & 67.5 & 58.6 & 68.1 &
63.8 & 59.3 & 62.7 & 66.1 \\

GAT & PC + H1-PD + H2-PD & 67.6 & 90.4 &
69.3 & 66.5 & 65.3 & 66.7 &
68.7 & 65.6 & 68.3 & 69.3 &
66.7 & 68.6 & 60.9 & 69.8 &
65.7 & 58.8 & 63.1 & 67.1 \\

 & PC + S:H1-PD + S:H2-PD & \textbf{70.6} & \textbf{91.9} &
70.9 & 68.3 & 67.8 & 69.9 &
72.2 & 68.6 & 73.9 & 71.1 &
69.7 & 72.3 & 61.7 & 71.6 &
68.7 & 60.1 & 65.7 & 69.8 \\
\hline

 & PC & 61.5 & 88.7 &
63.7 & 60.4 & 61.8 & 58.9 &
63.2 & 59.7 & 63.4 & 61.9 &
60.1 & 61.9 & 54.8 & 64.4 &
59.9 & 54.8 & 57.9 & 60.6 \\

GCN & PC + H1-PD + H2-PD & 62.2 & 88.6 &
63.3 & 60.9 & 61.9 & 61.4 &
63.3 & 60.8 & 62.4 & 63.1 &
62.8 & 63.8 & 57.8 & 65.4 &
61.3 & 54.9 & 59.1 & 61.2 \\

 & PC + S:H1-PD + S:H2-PD & \textbf{64.3} & \textbf{89.9} &
65.1 & 63.1 & 63.2 & 63.5 &
65.8 & 62.3 & 64.5 & 66.7 &
63.2 & 66.3 & 59.9 & 68.2 &
61.7 & 57.1 & 59.9 & 63.4 \\
\hline
\end{tabular}
}
\label{segmentation_results}
\end{table*}
\subsection{Downstream Task: Classification}\label{classification}

To understand the performance of the \textit{TopoGAT}-based significant point selection for point cloud classification on different GNN architectures, we use three different classification networks, each based on Graph Convolution Network (GCN), Graph Attention Network (GAT), and Graph Isomorphism Network (GIN). The experimental results of this comparative study are shown in Table \ref{gcn_gat_gin_results}. To keep these architectures as close as possible to each other, we just replaced the GNN layer in the whole architecture while keeping all other neural network components and parameters the same. We observe that in each case, the input PC with significant PD points (W/S PD points) shows higher overall accuracy as well as mean class-wise accuracy. 

Similarly, Table \ref{vectorization_results} shows the classification accuracies for different vectorization methods, such as persistence image (PI) and persistence landscape (PL), for the topological representation of the point cloud. Similar to the previous results, we observe that in addition to the geometric input i.e. $(x,y,z)$ coordinates of a point cloud, additional topological inputs $H_{1}$ PDs and $H_{2}$ PDs increase the classification accuracy, which is further enhanced in the case of using \textit{TopoGAT}-selected significant $H_{1}$ PDs and $H_{2}$ PD points as input. These results show that topological information provides complementary discriminative power beyond geometry alone, and that \textit{TopoGAT} effectively identifies and selects the most informative topological features for classification.

\subsection{Downstream Task: Part Segmentation}
We perform part-segmentation of ShapeNet point clouds to understand the performance of the \textit{TopoGAT}-based significant point selection. As shown in Table \ref{segmentation_results}, this experiment includes three kinds of GNN part-segmentation pipelines. The first architecture consists only of a point cloud as the input to the neural network, and hence it can be considered as the baseline architecture. The second architecture consists of three branches having an input as a point cloud, its corresponding full $H_{1}$-PD, and full $H_{2}$-PD. Similar to the second architecture, the third pipeline also consists of three branches with point cloud, $H_{1}$-PD, and $H_{2}$-PD; however, both the PDs consist of only the significant persistent points selected by the \textit{TopoGAT}. Additionally, we use GAT and GCN as two different GNNs to validate the performance of the \textit{TopoGAT}-based feature selection for different GNN architectures. We observe that in both GCN as well as in GAT architectures, the inclusion of topological features increases the mean intersection-over-union (mIoU) of the part-segmentation. However, instead of using full PD, the topological features (PDs) filtered by \textit{TopoGAT} further enhance the mIoU. This demonstrates that excluding insignificant topological features leads to more discriminative representations and improved segmentation performance.

Figure \ref{segresults} presents the part-segmentation visualizations on ShapeNet objects using both GAT and GCN architectures. For both of the GNN architectures, when relying solely on geometric input (PC), the models struggle to correctly identify distinct object parts such as those of the \textit{cap}, \textit{rocket}, \textit{car}, and \textit{earphone}. Incorporating topological input leads to noticeably improved segmentation quality, and the performance is further enhanced when using the \textit{TopoGAT}-selected PD features. These results demonstrate the effectiveness of topological information in point-cloud learning.

\begin{figure}[h]
\centering
\includegraphics[width=8.0 cm]{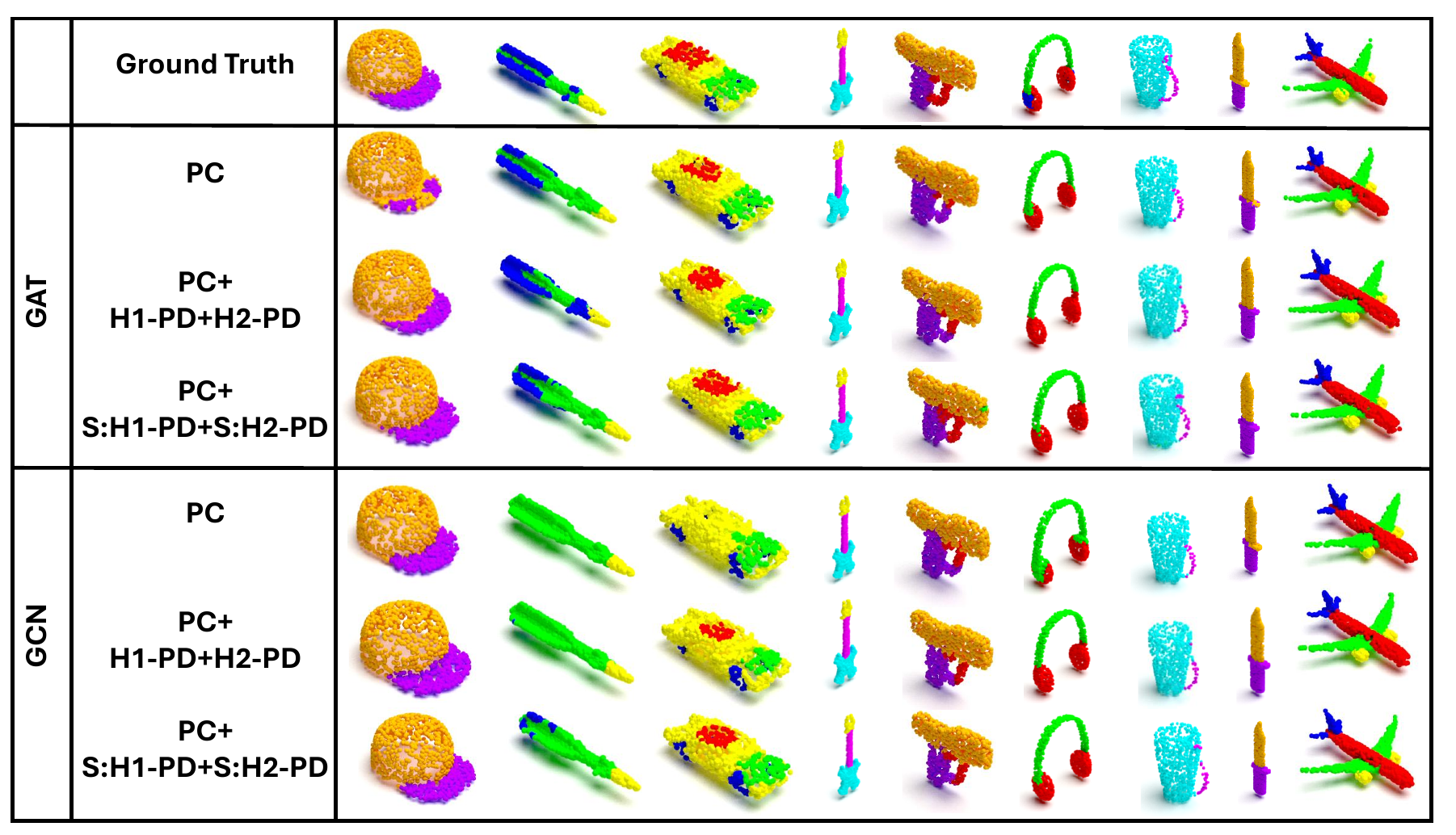}
\caption{Qualitative results for part segmentation on the eight ShapeNet objects for different inputs in GAT and GCN architectures. Significant PD points selected by \textit{TopoGAT} are abbreviated as S:H1-PD and S:H2-PD.}
\label{segresults}
\end{figure}
\subsection{Ablation on Loss Functions}
To understand the effect of different components of the proposed topological loss on downstream tasks, we use three variants of this topological loss. One loss variant \textit{WER} is the same as the proposed topological loss, consisting of all three components: Wasserstein Distance Loss, Persistent Entropy Loss, and Reduction Loss. The second variant \textit{WE} consists of two loss components, which are Wasserstein Distance Loss and Persistent Entropy Loss, such that each has a learnable coefficient. In the third variant \textit{W}, we only consider Wasserstein Distance Loss with a learnable coefficient. Using three different topological loss functions, we filter significant topological features using \textit{TopoGAT}, which are further used to perform downstream tasks classification on ModelNet40, shown in Table \ref{loss_comparison_cls}, and Part-segmentation on ShapeNet, shown in Table \ref{loss_comparison_seg}. 

To understand the effect of the loss function for different GNNs, we use three different GNN architectures, each with GCN, GAT, and GIN, respectively, for classification. As GIN was originally designed for graph-level classification, we do not consider it for segmentation experiments, which require per-point prediction. Thus, to study the effect of different TopoLoss components, for part-segmentation experiments, we only use GCN and GAT architectures. As shown in Table \ref{loss_comparison_cls} and \ref{loss_comparison_seg}, we observe that in classification as well as part-segmentation, the topological input with significant topological points selected using TopoLoss with all three components outperforms its two other variants \textit{WE} and \textit{W} in all types of GNN architectures. This shows that the inclusion of persistent entropy loss and reduction loss benefits the downstream tasks by selecting more significant topological points. 

\begin{table}[h!]
\centering
\captionsetup{singlelinecheck=false, justification=centering}
\caption{Comparison of Loss Functions for GAT, GCN, and GIN-based architectures for classification on ModelNet40. OA and mAcc in \%. W, E and R stand for Wasserstein Distance Loss, Persistent Entropy Loss, and
Reduction Loss, respectively.}
\small
\resizebox{\linewidth}{!}{%
\begin{tabular}{|p{1.9cm}|cc|cc|cc|}
\hline
\textbf{Loss Function Variant} &
\multicolumn{2}{|c|}{\textbf{GAT}} &
\multicolumn{2}{c|}{\textbf{GCN}} &
\multicolumn{2}{c|}{\textbf{GIN}}\\ \cline{2-7}
 & \textbf{OA} & \textbf{mAcc} 
 & \textbf{OA} & \textbf{mAcc} 
& \textbf{OA} & \textbf{mAcc} \\
\hline
WER & 87.14 & 82.01 & 84.98 & 80.85 & 69.87 & 56.01\\
WE  & 84.68 & 80.36 & 82.04 & 76.53 & 65.75 & 53.97 \\
W   & 82.97 & 79.43 & 81.70 & 76.28 & 64.82 & 53.43 \\
\hline
\end{tabular}
}
\label{loss_comparison_cls}
\end{table}

\begin{table}[h!]
\centering
\captionsetup{singlelinecheck=false, justification=centering}
\caption{Comparison of Loss Functions for GAT and GCN-based architectures for Part-segmentation on ShapeNet. mIou and pAcc in \%. W, E and R stand for Wasserstein Distance Loss, Persistent Entropy Loss, and
Reduction Loss, respectively.}
\small
\begin{tabular}{|p{1.9cm}|cc|cc|}
\hline
\textbf{Loss Function Variant} &
\multicolumn{2}{|c|}{\textbf{GAT}} &
\multicolumn{2}{c|}{\textbf{GCN}} \\ \cline{2-5}
 & \textbf{mIoU} & \textbf{pAcc} 
 & \textbf{mIoU} & \textbf{pAcc} \\
\hline
WER & 70.6 & 91.9 & 64.3 & 89.9 \\
WE  & 67.1 & 90.6 & 62.8 & 88.5 \\
W   & 66.6 & 90.3 & 61.6 & 88.3 \\
\hline
\end{tabular}
\label{loss_comparison_seg}
\end{table}

\subsection{Limitations}

Through the comparative study of the downstream tasks classification and part-segmentation, we aim to demonstrate the necessity of incorporating topological features and to evaluate the effectiveness of the proposed \textit{TopoGAT}-based significant topological feature selection strategy. While the present analysis is restricted to fundamental graph neural network (GNN) architectures, future work may explore its applicability to more sophisticated neural network architectures.

For the downstream tasks, we employ standard geometric loss functions, that is, cross-entropy, to facilitate learning from both raw point clouds and persistence diagram-derived feature maps. However, we believe that integrating topologically informed loss components could offer a more explicit encoding of topological structure, potentially enhancing performance on downstream objectives.

\section{Conclusion and Future Work}
\label{conclusion}
This paper presents the topological datasets of benchmark 3D point cloud datasets that provide topological representations of each point cloud in homology dimensions 1 and 2. These datasets can be directly utilized for persistent homology-based point cloud learning. A novel GAT-based topological feature selection neural network, \textit{TopoGAT}, presents a topologically efficient solution for significant feature selection of large point cloud datasets by utilizing a novel topological loss term, TopoLoss. Comparative study with the statistical methods shows the efficient selection of significant features without significantly altering the topology of the dataset. Further, the comparative study on various GNN networks with topological features showcases the need for further exploration of topological information in 3D shape analysis.

In this work, we use a classification-guided network for significant topological feature selection, and our results show the effectiveness of its feature selection in both classification as well as part-segmentation. However, we believe that \textit{TopoGAT} can be extended to any supervised learning network for point cloud learning based downstream tasks as a preprocessing module, which can guide the \textit{TopoGAT} learning based on specific tasks in the future. Moreover, we also believe that the inclusion of topological loss terms in the loss function for downstream tasks would further enhance the ability of task-specific models to leverage topological information, and hence, it remains a promising direction for future work.


\bibliographystyle{cag-num-names}
\bibliography{refs}

@article{cohen2007stability,
  author    = {Cohen-Steiner, David and Edelsbrunner, Herbert and Harer, John},
  title     = {Stability of Persistence Diagrams},
  journal   = {Mathematics of Computation},
  volume    = {76},
  number    = {257},
  pages     = {1811--1833},
  year      = {2007},
  doi       = {10.1090/S0025-5718-07-01970-6}
}

@article{ctralie2018ripser,
    doi = {10.21105/joss.00925},
    url = {https://doi.org/10.21105/joss.00925},
    year  = {2018},
    month = {Sep},
    publisher = {The Open Journal},
    volume = {3},
    number = {29},
    pages = {925},
    author = {Christopher Tralie and Nathaniel Saul and Rann Bar-On},
    title = {{Ripser.py}: A Lean Persistent Homology Library for Python},
    journal = {The Journal of Open Source Software}
}

@article{fasy2014confidence,
  title={Confidence sets for persistence diagrams},
  author={Fasy, Brittany Terese and Lecci, Fabrizio and Rinaldo, Alessandro and Wasserman, Larry and Balakrishnan, Sivaraman and Singh, Aarti},
  year={2014},
  journal={arxiv}
}

@article{edelsbrunner2002topological,
  title={Topological persistence and simplification},
  author={Edelsbrunner and Letscher and Zomorodian},
  journal={Discrete \& computational geometry},
  volume={28},
  number={4},
  pages={511--533},
  year={2002},
  publisher={Springer}
}

@article{clough2020topological,
  title={A topological loss function for deep-learning based image segmentation using persistent homology},
  author={Clough, James R and Byrne, Nicholas and Oksuz, Ilkay and Zimmer, Veronika A and Schnabel, Julia A and King, Andrew P},
  journal={IEEE transactions on pattern analysis and machine intelligence},
  volume={44},
  number={12},
  pages={8766--8778},
  year={2020},
  publisher={IEEE}
}

@book{dey2022computational,
  title={Computational topology for data analysis},
  author={Dey, Tamal Krishna and Wang, Yusu},
  year={2022},
  publisher={Cambridge University Press}
}

@inproceedings{wu20153d,
  title={3d shapenets: A deep representation for volumetric shapes},
  author={Wu, Zhirong and Song, Shuran and Khosla, Aditya and Yu, Fisher and Zhang, Linguang and Tang, Xiaoou and Xiao, Jianxiong},
  booktitle={Proceedings of the IEEE conference on computer vision and pattern recognition},
  pages={1912--1920},
  year={2015}
}

@inproceedings{liu2022toposeg,
  title={TopoSeg: Topology-aware Segmentation for Point Clouds.},
  author={Liu, Weiquan and Guo, Hanyun and Zhang, Weini and Zang, Yu and Wang, Cheng and Li, Jonathan},
  booktitle={IJCAI},
  pages={1201--1208},
  year={2022}
}

@article{montufar2020can,
  title={Can neural networks learn persistent homology features?},
  author={Mont{\'u}far, Guido and Otter, Nina and Wang, Yuguang},
  journal={arXiv preprint arXiv:2011.14688},
  year={2020}
}

@inproceedings{peek2023synthetic,
  title={Synthetic data generation and deep learning for the topological analysis of 3d data},
  author={Peek, Dylan and Skerritt, Matthew P and Chalup, Stephan},
  booktitle={2023 International Conference on Digital Image Computing: Techniques and Applications (DICTA)},
  pages={121--128},
  year={2023},
  organization={IEEE}
}

@article{zhou2022learning,
  title={Learning persistent homology of 3D point clouds},
  author={Zhou, Chi and Dong, Zhetong and Lin, Hongwei},
  journal={Computers \& Graphics},
  volume={102},
  pages={269--279},
  year={2022},
  publisher={Elsevier}
}

@article{hu2024topology,
  title={Topology-aware latent diffusion for 3d shape generation},
  author={Hu, Jiangbei and Fei, Ben and Xu, Baixin and Hou, Fei and Yang, Weidong and Wang, Shengfa and Lei, Na and Qian, Chen and He, Ying},
  journal={arXiv preprint arXiv:2401.17603},
  year={2024}
}

@article{jignasu2024stitch,
  title={STITCH: Surface reconstrucTion using Implicit neural representations with Topology Constraints and persistent Homology},
  author={Jignasu, Anushrut and Herron, Ethan and Jiang, Zhanhong and Sarkar, Soumik and Hegde, Chinmay and Ganapathysubramanian, Baskar and Balu, Aditya and Krishnamurthy, Adarsh},
  journal={arXiv preprint arXiv:2412.18696},
  year={2024}
}

@article{zia2024topological,
  title={Topological deep learning: a review of an emerging paradigm},
  author={Zia, Ali and Khamis, Abdelwahed and Nichols, James and Tayab, Usman Bashir and Hayder, Zeeshan and Rolland, Vivien and Stone, Eric and Petersson, Lars},
  journal={Artificial Intelligence Review},
  volume={57},
  number={4},
  pages={77},
  year={2024},
  publisher={Springer}
}

@article{malott2022survey,
  title={A survey on the high-performance computation of persistent homology},
  author={Malott, Nicholas O and Chen, Shangye and Wilsey, Philip A},
  journal={IEEE Transactions on Knowledge and Data Engineering},
  volume={35},
  number={5},
  pages={4466--4484},
  year={2022},
  publisher={IEEE}
}

@article{coskunuzer2024topological,
  title={Topological methods in machine learning: A tutorial for practitioners},
  author={Coskunuzer, Baris and Ak{\c{c}}ora, C{\"u}neyt G{\"u}rcan},
  journal={arXiv preprint arXiv:2409.02901},
  year={2024}
}

@inproceedings{qi2017pointnet,
  title={Pointnet: Deep learning on point sets for 3d classification and segmentation},
  author={Qi, Charles R and Su, Hao and Mo, Kaichun and Guibas, Leonidas J},
  booktitle={Proceedings of the IEEE conference on computer vision and pattern recognition},
  pages={652--660},
  year={2017}
}

@inproceedings{fugacci2016persistent,
  title={Persistent Homology: a Step-by-step Introduction for Newcomers.},
  author={Fugacci, Ulderico and Scaramuccia, Sara and Iuricich, Federico and De Floriani, Leila and others},
  booktitle={STAG},
  pages={1--10},
  year={2016}
}

@article{berwald2018computing,
  title={Computing Wasserstein distance for persistence diagrams on a quantum computer},
  author={Berwald, Jesse J and Gottlieb, Joel M and Munch, Elizabeth},
  journal={arXiv preprint arXiv:1809.06433},
  year={2018}
}

@article{agami2023comparison,
  title={Comparison of persistence diagrams},
  author={Agami, Sarit},
  journal={Communications in Statistics-Simulation and Computation},
  volume={52},
  number={5},
  pages={1948--1961},
  year={2023},
  publisher={Taylor \& Francis}
}

@article{merelli2015topological,
  title={Topological characterization of complex systems: Using persistent entropy},
  author={Merelli, Emanuela and Rucco, Matteo and Sloot, Peter and Tesei, Luca},
  journal={Entropy},
  volume={17},
  number={10},
  pages={6872--6892},
  year={2015},
  publisher={MDPI}
}

@misc{torchvision2016,
    title        = {TorchVision: PyTorch's Computer Vision library},
    author       = {TorchVision maintainers and contributors},
    year         = 2016,
    journal      = {GitHub repository},
    publisher    = {GitHub},
    howpublished = {\url{https://github.com/pytorch/vision}}
}

@article{shapenet2015,
  title={A scalable active framework for region annotation in 3d shape collections},
  author={Yi, Li and Kim, Vladimir G and Ceylan, Duygu and Shen, I-Chao and Yan, Mengyan and Su, Hao and Lu, Cewu and Huang, Qixing and Sheffer, Alla and Guibas, Leonidas},
  journal={ACM Transactions on Graphics (ToG)},
  volume={35},
  number={6},
  pages={1--12},
  year={2016},
  publisher={ACM New York, NY, USA}
}

\end{document}